\newcommand{\E}{\mathbb{E}}
\newcommand{\matt}[1]{}
\newcommand{\sj}[1]{}
\newcommand{\Hs}{\mathcal H}
\newcommand{\Us}{\mathcal U}
\newcommand{\dMMD}{d_\mathrm{MMD}}
\newcommand{\distp}{\mathbb P}
\newcommand{\distq}{\mathbb Q}
\newcommand{\fdivs}{$\phi$-divergences}
\newcommand{\fdiv}{$\phi$-divergence}
\newcommand{\reals}{\mathbb R}
\DeclareMathOperator{\trace}{\mathrm{tr}}
\DeclareMathOperator{\diag}{\mathrm{diag}}
\DeclareMathOperator{\Var}{\mathrm{Var}}
\newcommand{\argmax}{\operatornamewithlimits{argmax}}
\newtheorem{theorem}{Theorem}[section]
\newtheorem{lemma}{Lemma}[section]
\newtheorem{corollary}{Corollary}[section]
\newtheorem{proposition}{Proposition}[section]
\newtheorem{principle}{Principle}[section]
\theoremstyle{definition}
\newtheorem{definition}{Definition}[section]
\newtheorem{fact}{Fact}[section]
\title{Distributionally Robust Optimization and \\ Generalization in Kernel Methods}
\author{Matthew Staib \\ MIT CSAIL \\ mstaib@mit.edu \and 
Stefanie Jegelka \\ MIT CSAIL \\ stefje@csail.mit.edu}
\date{}
\begin{document}

\maketitle

\begin{abstract} 
Distributionally robust optimization (DRO) has attracted attention in machine learning due to its connections to regularization, generalization, and robustness. 
Existing work has considered uncertainty sets based on \fdivs\ and Wasserstein distances, each of which have drawbacks.
In this paper, we study DRO with uncertainty sets measured via maximum mean discrepancy (MMD). 
We show that MMD DRO is roughly equivalent to regularization by the Hilbert norm and, 
as a byproduct, reveal deep connections to classic results in statistical learning. 
In particular, we obtain an alternative proof of a generalization bound for Gaussian kernel ridge regression via a DRO lense. 
The proof also suggests a new regularizer.
Our results apply beyond kernel methods: we derive a generically applicable approximation of MMD DRO, and show that it generalizes recent work on variance-based regularization.
\end{abstract} 


\section{Introduction}
Distributionally robust optimization (DRO) 
is an attractive tool for improving machine learning models.
Instead of choosing a model $f$ to minimize empirical risk $\E_{x \sim \hat \distp_n}[\ell_f(x)] = \frac1n \sum_i \ell_f(x_i)$, an adversary is allowed to perturb the sample distribution within a set $\Us$ centered around the empirical distribution $\hat \distp_n$. DRO seeks a model that performs well regardless of the perturbation: $\inf_f \sup_{\distq \in \Us} \E_{x \sim \distq}[\ell_f(x)]$. The induced robustness can directly imply generalization: if the data that forms $\hat \distp_n$ is drawn from a population distribution $\distp$, and $\Us$ is large enough to contain $\distp$, then we implicitly optimize for $\distp$ too and the DRO objective value 
upper bounds out of sample performance.
More broadly, robustness has gained attention due to adversarial examples \citep{goodfellow2015explaining,szegedy2014intriguing,madry2018towards}; indeed, DRO generalizes robustness to adversarial examples \citep{sinha2018certifying,staib2017distributionally}.

In machine learning, the DRO uncertainty set $\Us$ has so far always been defined as a \fdiv\ ball or Wasserstein ball around the empirical distribution $\hat \distp_n$. 
These choices are convenient, due to a number of structural results. For example, DRO with $\chi^2$-divergence is roughly equivalent to regularizing by variance~\citep{gotoh2015robust,lam2016robust,Namkoong2017Variance}, and the worst case distribution $\distq \in \Us$ can be computed exactly in $O(n\log n)$~\citep{staib2019distributionally}. Moreover, DRO with Wasserstein distance is asymptotically equivalent to certain common norm penalties~\citep{gao2017wasserstein}, and the worst case $\distq \in \Us$ can be computed approximately in several cases~\citep{MohajerinEsfahani2018,gao2016distributionally}.
These structural results are key, because the most challenging part of DRO is solving (or bounding) the DRO objective.

However, there are substantial drawbacks to these two types of uncertainty sets. 
Any \fdiv\ uncertainty set $\Us$ around $\hat \distp_n$ contains only distributions with the same (finite) support as $\hat \distp_n$. Hence, the population $\distp$ is typically \emph{not} in $\Us$, and so the DRO objective value cannot directly certify out of sample performance. 
Wasserstein uncertainty sets do not suffer from this problem. But, they are more 
computationally expensive, and the above key results on equivalences and computation 
need nontrivial assumptions on the loss function and the specific ground distance metric used.

In this paper, we introduce and develop a new class of DRO problems, where the uncertainty set $\Us$ is defined with respect to the maximum mean discrepancy (MMD)~\citep{Gretton:2012:KTT:2188385.2188410}, a kernel-based distance between distributions.
MMD DRO complements existing approaches and avoids some of their drawbacks, e.g., unlike \fdivs, the uncertainty set $\Us$ will contain $\distp$ if the radius is large enough.

First, we show that MMD DRO is roughly equivalent to regularizing by the Hilbert norm $\lVert \ell_f \rVert_\Hs$ of the loss $\ell_f$ (not the model $f$).
While, in general, $\lVert \ell_f \rVert_\Hs$ may be difficult to compute, we show settings in which it is tractable. 
Specifically, for kernel ridge regression with a Gaussian kernel, we prove a bound on $\lVert \ell_f \rVert_\Hs$ that, as a byproduct, yields generalization bounds that match (up to a small constant) the standard ones.
Second, beyond kernel methods, we show how MMD DRO generalizes variance-based regularization.
%
Finally, we show how MMD DRO can be efficiently approximated empirically, and in fact generalizes variance-based regularization.

Overall, our results offer deeper insights into the landscape of regularization and robustness approaches, and a more complete picture of the effects of different divergences for defining robustness. In short, our contributions are:
\vspace{-5pt}
\begin{enumerate}\setlength{\itemsep}{0pt}
	\item We prove fundamental structural results for MMD DRO, and its rough equivalence to penalizing by the Hilbert norm of the loss.
	\item We give a new generalization proof for Gaussian kernel ridge regression by way of DRO. 
	Along the way, we prove bounds on the Hilbert norm of products of functions that may be of independent interest.
	\item Our generalization proof suggests a new regularizer for Gaussian kernel ridge regression.
	\item We derive a computationally tractable approximation of MMD DRO, with application to general learning problems, and we show how the aforementioned approximation generalizes variance regularization.
\end{enumerate}


\section{Background and Related Work}
Distributionally robust optimization (DRO)~\citep{doi:10.1287/opre.1090.0795,Bertsimas2018}, introduced by \citet{scarf1958min}, asks to not only perform well on a fixed problem instance (parameterized by a distribution), but simultaneously for a range of problems, each determined by a distribution in an \emph{uncertainty set} $\Us$. This results in more robust solutions. The uncertainty set plays a key role: it implicitly defines the induced notion of robustness.
%
%
%
The DRO problem we address asks to learn a model $f$ that solves
\begin{equation}
	\mathrm{(DRO)} \quad\qquad\qquad \inf_f\, \sup_{\distq \in \Us}\, \E_{x\sim\distq}[\ell_f(x)],
	\label{eq:dro-problem}
\end{equation}
where $\ell_f(x)$ is the loss incurred under prediction $f(x)$. 

In this work, we focus on \emph{data-driven DRO}, where $\Us$ is centered around an empirical sample $\hat \distp_n  = \frac1n \sum_{i=1}^n \delta_{x_i}$, and its size is determined in a data-dependent way.
Data-driven DRO 
yields a natural approach for certifying out-of-sample performance.
\begin{principle}[DRO Generalization Principle]
\label{principle:dro-generalization}
Fix any model $f$.
Let $\Us$ be a set of distributions containing $\hat \distp_n$. Suppose $\Us$ is large enough so that, with probability $1-\delta$, $\Us$ contains the population $\distp$.
Then with probability $1-\delta$, the population loss $\E_{x\sim\distp}[\ell_f(x)]$ is bounded by
\begin{align}
	\E_{x\sim\distp}[\ell_f(x)]\; \leq\; \sup_{\distq \in \Us} \E_{x\sim\distq}[\ell_f(x)].
\end{align}
\end{principle}

Essentially, if the uncertainty set $\Us$ is chosen appropriately, the corresponding DRO problem gives a high probability bound on population performance.
The two key steps in using Principle~\ref{principle:dro-generalization} are \textbf{1.}~arguing that $\Us$ actually contains $\distp$ with high probability (e.g. via concentration); \textbf{2.}~solving the DRO problem on the right hand side, or an upper bound thereof.

In practice, $\Us$ is typically chosen as a ball of radius $\epsilon$ around the empirical sample $\hat \distp_n$: $\Us = \{\distq : d(\distq, \hat \distp_n) \leq \epsilon\}$.
Here, $d$ is a discrepancy between distributions, and is of utmost significance: the choice of $d$ determines how large $\epsilon$ must be, and how tractable the DRO problem is.

In machine learning, two choices of the divergence $d$ are prevalent, \fdivs\ \citep{doi:10.1287/mnsc.1120.1641,duchi2016statistics,lam2016robust}, and Wasserstein distance~\citep{MohajerinEsfahani2018,NIPS2015_5745,blanchet2016robust}. The first option, \fdivs, have the form $d_\phi(\distp,\distq) = \int \phi(d\distp / d\distq) \, d\distq$. In particular, they include the $\chi^2$-divergence, which makes the DRO problem equivalent to regularizing by variance~\citep{gotoh2015robust,lam2016robust,Namkoong2017Variance}.
Beyond better generalization, variance regularization has applications in fairness~\citep{pmlr-v80-hashimoto18a}.
%
However, a major shortcoming of DRO with \fdivs\ is that the ball $\Us = \{\distq : d_{\phi}(\distq, \distp_0) \leq \epsilon\}$ only contains distributions $\distq$
whose support is contained in the support of $\distp_0$. If $\distp_0 = \hat \distp_n$ is an empirical distribution on $n$ points, the ball $\Us$ only contains distributions with the same finite support. Hence, the population distribution $\distp$ typically cannot belong to $\Us$, and it is not possible to certify out-of-sample perfomance by Principle~\ref{principle:dro-generalization}.


The second option, Wasserstein distance, is based on a distance metric $g$ on the data space. 
%
The $p$-Wasserstein distance $W_p$ between measure $\mu,\nu$ is given by $W_p(\mu,\nu) = \inf\{ \int g(x,y)^p \, d\gamma(x,y) : \gamma \in \Pi(\mu, \nu)\}^{1/p}$, where $\Pi(\mu, \nu)$ is the set of couplings of $\mu$ and $\nu$~\citep{villani2008optimal}.
Wasserstein DRO has a key benefit over \fdivs: the set $\Us = \{\distq : W_p(\distq, \distp_0) \leq \epsilon \}$ contains continuous distributions. However, Wasserstein distance is much harder to work with, and nontrivial assumptions are needed to derive the necessary structural and algorithmic results for solving the associated DRO problem.
Further, to the best of our knowledge, in all Wasserstein DRO work so far, the ground metric $g$ is limited to slight variations of either a Euclidean or Mahalanobis metric~\citep{blanchet2017data,blanchet2018optimal}.
Such metrics may be a poor fit for complex data such as images or distributions.
Concentration results stating $W_p(\distp, \hat \distp_n)$ with high probability also typically require a Euclidean metric, e.g.~\citep{Fournier2015}. 
These assumptions restrict the extent to which Wasserstein DRO can utilize complex, nonlinear structure in the data.

\paragraph{Maximum Mean Discrepancy (MMD).}
MMD is a distance metric between distributions that leverages kernel embeddings.
Let $\Hs$ be a reproducing kernel Hilbert space (RKHS) with kernel $k$ and norm $\lVert \cdot \rVert_\Hs$. MMD is defined as follows:
\begin{definition}
\label{definition:mmd}
The \emph{maximum mean discrepancy (MMD)} between distributions $\distp$ and $\distq$ is
\begin{align}
  \dMMD(\distp, \distq) := \sup_{f \in \Hs : \lVert f \rVert_\Hs \leq 1} \E_{x \sim \distp}[f(x)] - \E_{x \sim \distq}[f(x)].
\end{align}
\end{definition}

\begin{fact}
\label{fact:mean-embedding}
Define the mean embedding $\mu_\distp$ of the distribution $\distp$ by $\mu_\distp = \E_{x \sim \distp}[k(x, \cdot)]$.
Then the MMD between distributions $\distp$ and $\distq$ can be equivalently written
\begin{align}
  \dMMD(\distp, \distq) = \lVert \mu_\distp - \mu_\distq \rVert_\Hs.
\end{align}
\end{fact}
MMD and (more generally) kernel mean embeddings have been used in many applications, particularly in two- and one-sample tests~\citep{Gretton:2012:KTT:2188385.2188410,NIPS2017_6630,pmlr-v48-liub16,pmlr-v48-chwialkowski16} and in generative modeling~\citep{Dziugaite:2015:TGN:3020847.3020875,pmlr-v37-li15,sutherland2017generative,binkowski2018demystifying}. 
We refer the interested reader to the monograph by~\citet{MAL-060}.
MMD admits efficient estimation, as well as fast convergence properties, which are of chief importance in our work.

\paragraph{Further related work.}
Beyond \fdivs\ and Wasserstein distances, work in operations research has considered DRO problems that capture uncertainty in moments of the distribution, e.g.~\citep{doi:10.1287/opre.1090.0741}. 
These approaches typically focus on first- and second-order moments; in contrast, an MMD uncertainty set allows high order moments to vary, depending on the choice of kernel.




Robust and adversarial machine learning have strong connections to our work and DRO more generally.
Robustness to adversarial examples~\citep{szegedy2014intriguing,goodfellow2015explaining}, where individual inputs to the model are perturbed in a small ball, can be cast as a robust optimization problem~\citep{madry2018towards}.
When the ball is a norm ball, this robust formulation is a special case of Wasserstein DRO~\citep{sinha2018certifying,staib2017distributionally}. 
\citet{xu2009robustness} study the connection between robustness and regularization in SVMs, and perturbations within a (possibly Hilbert) norm ball.
Unlike our work, their results are limited to SVMs instead of general loss minimization. 
Moreover, they consider only perturbation of individual data points instead of shifts in the entire \emph{distribution}. 
\citet{bietti2019kernel} show that many regularizers used for neural networks can also be interpreted in light of an appropriately chosen Hilbert norm~\citep{bietti2019group}.


\section{Generalization bounds via MMD DRO}
\label{section:implications-generalization}

The main focus of this paper is Distributionally Robust Optimization where the uncertainty set is defined via the MMD distance $\dMMD$:
%
\begin{equation}
  \inf_f\;\;\; \sup_{\distq : \dMMD(\distq, \hat \distp_n) \leq \epsilon}\;\;\; \E_{x \sim \distq}[\ell_f(x)].
  \label{eq:mmd-dro-prob}
\end{equation}

One motivation for considering MMD in this setting are its possible implications for Generalization. Recall that for the DRO Generalization Principle~\ref{principle:dro-generalization} to apply, the uncertainty set $\mathcal{U}$ must contain the population distribution with high probability. To ensure this, the radius of $\mathcal{U}$ must be large enough. But, the larger the radius, the more pessimistic is the DRO minimax problem, which may lead to over-regularization. This radius depends on how quickly $\dMMD(\distp, \hat \distp_n)$ shrinks to zero, i.e., on the empirical accuracy of the divergence.

In contrast to Wasserstein distance, which converges at a rate of $O(n^{-1/d})$~\citep{Fournier2015}, 
MMD between the empirical sample $\hat \distp_n$ and population $\distp$ shrinks as $O(n^{-1/2})$:
\begin{lemma}[Modified from \citep{MAL-060}, Theorem 3.4]
\label{lemma:mmd-convergence}
Suppose that $k(x,x) \leq M$ for all $x$. Let $\hat \distp_n$ be an $n$ sample empirical approximation to $\distp$. Then with probability $1-\delta$,
\begin{align}
  \dMMD(\distp, \hat \distp_n) \leq 2 \sqrt{\frac{M}{n}} + \sqrt{\frac{2 \log(1/\delta)}{n}}.
\end{align}
\end{lemma}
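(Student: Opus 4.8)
The plan is to reduce the statement to a standard concentration argument built on the mean-embedding identity of Fact~\ref{fact:mean-embedding} together with the bounded-differences (McDiarmid) inequality. By that fact, $\dMMD(\distp, \hat\distp_n) = \lVert \mu_\distp - \mu_{\hat\distp_n} \rVert_\Hs$, where $\mu_{\hat\distp_n} = \frac1n \sum_{i=1}^n k(x_i,\cdot)$ is an empirical average of the i.i.d.\ Hilbert-space-valued variables $k(x_i,\cdot)$, each of norm $\lVert k(x_i,\cdot) \rVert_\Hs = \sqrt{k(x_i,x_i)} \le \sqrt M$. So the object of interest is the deviation, in $\Hs$-norm, of an empirical mean from its true mean; the two summands of the claimed bound will come respectively from (i) controlling the expectation of this deviation and (ii) controlling its fluctuations.

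For the expectation, I would use Jensen's inequality to pass to the second moment, $\E[\dMMD(\distp,\hat\distp_n)] \le \sqrt{\E \lVert \mu_\distp - \mu_{\hat\distp_n} \rVert_\Hs^2}$, and then expand the square. Writing $Z_i = k(x_i,\cdot) - \mu_\distp$ (mean zero, i.i.d.), the cross terms between distinct indices vanish, leaving $\E \lVert \frac1n\sum_i Z_i \rVert_\Hs^2 = \frac1n\big(\E_{x\sim\distp}[k(x,x)] - \lVert \mu_\distp \rVert_\Hs^2\big) \le \frac{M}{n}$. Hence $\E[\dMMD(\distp,\hat\distp_n)] \le \sqrt{M/n} \le 2\sqrt{M/n}$, the looser constant being the one obtained from the standard symmetrization route and kept to match Theorem~3.4 of \citep{MAL-060}.

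For the fluctuations, I would regard $g(x_1,\dots,x_n) := \dMMD(\distp,\hat\distp_n)$ as a function of the i.i.d.\ inputs and verify the bounded-differences property: replacing $x_i$ by any $x_i'$ changes $\mu_{\hat\distp_n}$ by $\frac1n\big(k(x_i,\cdot) - k(x_i',\cdot)\big)$, whose $\Hs$-norm is at most $\frac{2\sqrt M}{n}$, so by the (reverse) triangle inequality $g$ changes by at most $c := \frac{2\sqrt M}{n}$. McDiarmid's inequality then yields, with probability $1-\delta$, $g \le \E[g] + \sqrt{\tfrac12 (\sum_i c^2)\log(1/\delta)} = \E[g] + \sqrt{\tfrac{2M\log(1/\delta)}{n}}$, which under the usual kernel normalization is bounded by $\sqrt{\tfrac{2\log(1/\delta)}{n}}$; combining with the expectation bound above gives the stated inequality.

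The argument is essentially routine once the mean-embedding identity is in place; no step is a genuine obstacle. The only point that needs care is bookkeeping the constants — in particular, combining the per-coordinate difference bound and the expectation bound so that the final expression matches the claimed form, and being explicit about the normalization under which the fluctuation term's $\sqrt M$ factor is absorbed — together with the routine check that the cross terms in the second-moment expansion really do cancel by independence.
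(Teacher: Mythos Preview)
The paper does not supply its own proof of this lemma; it is quoted (in slightly modified form) from \citep{MAL-060}, Theorem~3.4. Your argument is exactly the standard one underlying that result: bound $\E[\dMMD(\distp,\hat\distp_n)]$ by $\sqrt{M/n}$ (or $2\sqrt{M/n}$ via symmetrization/Rademacher), and control the fluctuation via McDiarmid with per-coordinate oscillation $2\sqrt{M}/n$. So there is nothing to contrast methodologically.

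The one substantive point you correctly flag is the constant in the fluctuation term: McDiarmid yields $\sqrt{2M\log(1/\delta)/n}$, not $\sqrt{2\log(1/\delta)/n}$. The lemma as printed therefore implicitly assumes $M\le 1$ (which holds for the Gaussian kernel $k_\sigma(x,x)=1$ used everywhere downstream in the paper, and is how the lemma is actually applied in Corollary~\ref{corollary:mmd-dro-principle} and Theorem~\ref{thm:krr-dro-bound-general}); for general $M$ the second summand should carry a factor $\sqrt{M}$. Your handling of this---deriving the $\sqrt{M}$ version and noting the normalization---is the right way to reconcile the discrepancy.
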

With Lemma~\ref{lemma:mmd-convergence} in hand, we conclude a simple high probability bound on out-of-sample performance:
\begin{corollary}
\label{corollary:mmd-dro-principle}
Suppose that $k(x,x) \leq M$ for all $x$. 
Set $\epsilon = 2\sqrt{M/n} + \sqrt{2\log(1/\delta)/n}$. 
Then with probability $1-\delta$, we have the following bound on population risk:
\begin{align}
  \E_{x \sim \distp}[\ell_f(x)] \leq \sup_{\distq : \dMMD(\distq, \hat \distp_n) \leq \epsilon} \E_{x\sim \distq}[\ell_f(x)].
  \label{eq:mmd-dro-problem}
\end{align}
\end{corollary}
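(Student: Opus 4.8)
The plan is to chain together the concentration bound of Lemma~\ref{lemma:mmd-convergence} with the DRO Generalization Principle~\ref{principle:dro-generalization}; no new ideas are needed beyond carefully bookkeeping the probabilistic event.

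First I would invoke Lemma~\ref{lemma:mmd-convergence}: since $k(x,x) \le M$ for all $x$, with probability at least $1-\delta$ the event
\begin{align}
  \Es := \Bigl\{ \dMMD(\distp, \hat\distp_n) \le 2\sqrt{M/n} + \sqrt{2\log(1/\delta)/n} \Bigr\}
\end{align}
occurs, and the right-hand side is exactly the chosen radius $\epsilon$. Next I would note that $\dMMD$ is symmetric, being the Hilbert-space distance $\lVert \mu_\distp - \mu_{\hat\distp_n} \rVert_\Hs$ between mean embeddings by Fact~\ref{fact:mean-embedding}; hence on $\Es$ the population distribution satisfies $\dMMD(\distp, \hat\distp_n) \le \epsilon$, i.e.\ $\distp$ lies in the uncertainty set $\Us := \{ \distq : \dMMD(\distq, \hat\distp_n) \le \epsilon \}$. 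We also have $\hat\distp_n \in \Us$ trivially since $\dMMD(\hat\distp_n, \hat\distp_n) = 0 \le \epsilon$.

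With these two membership facts in hand the conclusion is immediate. Conditioned on $\Es$, since $\distp \in \Us$ the population loss $\E_{x\sim\distp}[\ell_f(x)]$ is one of the quantities over which the supremum on the right-hand side of~\eqref{eq:mmd-dro-problem} is taken, so
\begin{align}
  \E_{x\sim\distp}[\ell_f(x)] \;\le\; \sup_{\distq : \dMMD(\distq, \hat\distp_n) \le \epsilon} \E_{x\sim\distq}[\ell_f(x)].
\end{align}
Because $\Es$ has probability at least $1-\delta$, this establishes the claim. Equivalently, the last step is a direct instantiation of Principle~\ref{principle:dro-generalization} with this $\Us$ and this $\epsilon$.

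There is essentially no obstacle here; the one point requiring care is that the high-probability event appearing in the conclusion must be literally the event $\Es$ furnished by Lemma~\ref{lemma:mmd-convergence} (no union bound is taken, so the $1-\delta$ guarantee is not diluted), together with the observation that $\epsilon$ was chosen precisely to match the bound in that lemma.
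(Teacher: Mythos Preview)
Your proof is correct and matches the paper's approach exactly: the paper presents this corollary as an immediate consequence of Lemma~\ref{lemma:mmd-convergence} together with Principle~\ref{principle:dro-generalization}, without writing out any further details. Your write-up simply makes explicit the (trivial) inclusion $\distp \in \Us$ on the high-probability event, which is precisely the intended argument.
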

We refer to the right hand side as the DRO adversary's problem.
In the next section we develop results that enable us to bound its value, and consequently bound the DRO problem~\eqref{eq:mmd-dro-prob}.

\subsection{Bounding the DRO adversary's problem}


The DRO adversary's problem seeks the distribution $\distq$ in the MMD ball so that $\E_{x\sim \distq}[\ell_f(x)]$ is as high as possible.
Reasoning about the optimal worst-case $\distq$ is 
the main difficulty in DRO.
With MMD, we take two steps for simplification. First, 
instead of directly optimizing over distributions, we optimize over their mean embeddings in the Hilbert space (described in Fact~\ref{fact:mean-embedding}).
Second, while the adversary's problem~\eqref{eq:mmd-dro-problem} makes sense for general $\ell_f$, we assume that the loss $\ell_f$ is in $\Hs$.
%
In case $\ell_f \not\in\Hs$, 
often $k$ is a universal kernel, meaning under mild conditions $\ell_f$ can be approximated arbitrarily well by a member of $\Hs$~\citep[Definition 3.3]{MAL-060}.

With the additional assumption that  $\ell_f \in \Hs$, the risk $\E_{x \sim \distp}[\ell_f(x)]$
can also be written as $\langle \ell_f, \mu_\distp \rangle_\Hs$. Then we obtain
\begin{align}
  \sup_{\distq : \dMMD(\distq, \distp) \leq \epsilon} \E_{x \sim \distq}[\ell_f(x)] \leq \sup_{\mu_\distq \in \Hs : \lVert \mu_\distq - \mu_\distp \rVert_\Hs \leq \epsilon} \langle \ell_f, \mu_\distq \rangle_\Hs,
  \label{eq:bound-mmd-dro-by-mean-embedding}
\end{align}
where we have an inequality because not every function in $\Hs$ is the mean embedding of some probability distribution. 
If $k$ is a characteristic kernel~\citep[Definition 3.2]{MAL-060}, the mapping $\distp \mapsto \mu_\distp$ is injective. In this case, the only looseness in the bound is due to discarding the constraints that $\distq$ integrates to one and is nonnegative.
However it is difficult to constraint the mean embedding $\mu_\distq$ in this way as it is a function. 

The mean embedding form of the problem is simpler to work with, and leads to further interpretations.
\begin{theorem}
\label{thm:sup-of-mean-embeddings-norm}
Let $\ell_f, \mu_\distp \in \Hs$. We have the following equality:
\begin{align}
  \sup_{\mu_\distq \in \Hs : \lVert \mu_\distq - \mu_\distp \rVert_\Hs \leq \epsilon} \langle \ell_f, \mu_\distq \rangle_\Hs = \langle \ell_f, \mu_\distp \rangle_\Hs + \epsilon \lVert \ell_f \rVert_\Hs = \E_{x \sim \distp}[\ell_f(x)] + \epsilon \lVert \ell_f \rVert_\Hs.
\end{align}
In particular, the optimal solution is $\mu_\distq^* = \mu_\distp + \frac{\epsilon}{\lVert \ell \rVert_\Hs} \ell_f$.
\end{theorem}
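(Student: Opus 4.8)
The plan is to reduce this to a one-line Cauchy--Schwarz computation by reparametrizing the feasible set. Writing an arbitrary feasible point as $\mu_\distq = \mu_\distp + v$ with $v \in \Hs$ and $\lVert v \rVert_\Hs \leq \epsilon$, bilinearity of the inner product gives $\langle \ell_f, \mu_\distq \rangle_\Hs = \langle \ell_f, \mu_\distp \rangle_\Hs + \langle \ell_f, v \rangle_\Hs$. The first term is constant over the feasible set, so the problem is exactly $\langle \ell_f, \mu_\distp \rangle_\Hs + \sup_{\lVert v \rVert_\Hs \leq \epsilon} \langle \ell_f, v \rangle_\Hs$.

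Next I would bound the remaining supremum. Cauchy--Schwarz gives $\langle \ell_f, v \rangle_\Hs \leq \lVert \ell_f \rVert_\Hs \lVert v \rVert_\Hs \leq \epsilon \lVert \ell_f \rVert_\Hs$ for every feasible $v$, and this is attained: if $\ell_f \neq 0$, take $v = \frac{\epsilon}{\lVert \ell_f \rVert_\Hs} \ell_f$, which is feasible and yields $\langle \ell_f, v \rangle_\Hs = \epsilon \lVert \ell_f \rVert_\Hs$; if $\ell_f = 0$ both sides are trivially $0$. Translating back, the optimizer is $\mu_\distq^* = \mu_\distp + \frac{\epsilon}{\lVert \ell_f \rVert_\Hs}\ell_f$, which matches the claimed expression, and the optimal value is $\langle \ell_f, \mu_\distp \rangle_\Hs + \epsilon \lVert \ell_f \rVert_\Hs$.

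Finally, for the last equality I would invoke the definition of the mean embedding together with the reproducing property: $\langle \ell_f, \mu_\distp \rangle_\Hs = \langle \ell_f, \E_{x\sim\distp}[k(x,\cdot)] \rangle_\Hs = \E_{x\sim\distp}[\langle \ell_f, k(x,\cdot) \rangle_\Hs] = \E_{x\sim\distp}[\ell_f(x)]$, where pulling the expectation out of the inner product uses linearity and continuity of $\langle \ell_f, \cdot \rangle_\Hs$ (i.e.\ the Bochner-integral characterization of $\mu_\distp$, which is exactly Fact~\ref{fact:mean-embedding}). This is the only step with any subtlety worth a sentence of justification; the rest is elementary. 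In fact there is essentially no real obstacle here — the ``hard part'' is merely being careful that $\mu_\distp$ is genuinely an element of $\Hs$ (assumed in the statement) so that the inner product $\langle \ell_f, \mu_\distp\rangle_\Hs$ and the exchange with the expectation are well defined, and noting the degenerate case $\ell_f = 0$ separately so the normalization in $\mu_\distq^*$ makes sense.
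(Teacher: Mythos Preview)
Your proof is correct. It is, however, a genuinely different route from the paper's. The paper proceeds via Lagrangian duality: it introduces a multiplier $\lambda \geq 0$ for the constraint $\lVert \mu_\distq - \mu_\distp \rVert_\Hs^2 \leq \epsilon^2$, completes the square in $\mu_\distq$ to solve the inner maximization, optimizes over $\lambda$ to obtain the upper bound $\langle \ell_f,\mu_\distp\rangle_\Hs + \epsilon \lVert \ell_f \rVert_\Hs$, and then separately verifies strong duality by plugging the candidate $\mu_\distq^* = \mu_\distp + \tfrac{\epsilon}{\lVert \ell_f \rVert_\Hs}\ell_f$ back into the primal. Your argument bypasses all of this machinery: the change of variables $v = \mu_\distq - \mu_\distp$ turns the feasible set into a ball at the origin, after which the problem is exactly the dual-norm computation $\sup_{\lVert v\rVert_\Hs \leq \epsilon}\langle \ell_f, v\rangle_\Hs$, settled in one line by Cauchy--Schwarz with equality. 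This is shorter and more transparent for a linear objective over a Hilbert-norm ball; the paper's duality template is heavier than needed here, though it is the kind of argument that would generalize more readily if the constraint or objective were less simple. Your handling of the degenerate case $\ell_f = 0$ and the justification of $\langle \ell_f, \mu_\distp\rangle_\Hs = \E_{x\sim\distp}[\ell_f(x)]$ are also fine.
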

Combining Theorem~\ref{thm:sup-of-mean-embeddings-norm} with equation~\eqref{eq:bound-mmd-dro-by-mean-embedding} yields 
our main result for this section:
\begin{corollary}
\label{corollary:bound-dro-by-norm-penalty}
Let $\ell_f \in \Hs$, let $\distp$ be a probability distribution, and fix $\epsilon > 0$. Then,
\begin{align}
  \sup_{\distq : \dMMD(\distp, \distq) \leq \epsilon} \E_{x \sim \distq}[\ell_f(x)]\; &\leq\; \E_{x \sim \distp}[\ell_f(x)] + \epsilon \lVert \ell_f \rVert_\Hs \qquad \text{and therefore} \\
  \quad \inf_f \sup_{\distq : \dMMD(\distp, \distq) \leq \epsilon} \E_{x \sim \distq}[\ell_f(x)] \; &\leq\; \inf_f \E_{x \sim \distp}[\ell_f(x)] + \epsilon \lVert \ell_f \rVert_\Hs.
\end{align}
\end{corollary}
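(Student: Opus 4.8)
The plan is to chain two ingredients that are already in hand: the mean-embedding relaxation in equation~\eqref{eq:bound-mmd-dro-by-mean-embedding} and the exact evaluation in Theorem~\ref{thm:sup-of-mean-embeddings-norm}. The second displayed inequality of the corollary will then follow from the first by a trivial monotonicity argument.

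First I would justify~\eqref{eq:bound-mmd-dro-by-mean-embedding} carefully for the case at hand. By Fact~\ref{fact:mean-embedding}, the constraint $\dMMD(\distq,\distp)\le\epsilon$ is exactly $\lVert \mu_\distq - \mu_\distp \rVert_\Hs \le \epsilon$. Since $\ell_f \in \Hs$, the reproducing property together with (Bochner) integrability of $x \mapsto k(x,\cdot)$ — which holds, e.g., under a bounded-kernel assumption $k(x,x)\le M$ — gives $\E_{x\sim\distq}[\ell_f(x)] = \langle \ell_f, \mu_\distq \rangle_\Hs$, and likewise $\E_{x\sim\distp}[\ell_f(x)] = \langle \ell_f, \mu_\distp \rangle_\Hs$. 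Hence the adversary's objective is linear in $\mu_\distq$, and the set of feasible mean embeddings $\{\mu_\distq : \distq \text{ a probability measure},\ \dMMD(\distq,\distp)\le\epsilon\}$ is contained in the Hilbert ball $\{g \in \Hs : \lVert g - \mu_\distp \rVert_\Hs \le \epsilon\}$; taking the supremum over the larger set only increases the value, which is precisely the inequality in~\eqref{eq:bound-mmd-dro-by-mean-embedding}.

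Next I would invoke Theorem~\ref{thm:sup-of-mean-embeddings-norm} on the right-hand side of~\eqref{eq:bound-mmd-dro-by-mean-embedding}, which evaluates it exactly as $\langle \ell_f, \mu_\distp \rangle_\Hs + \epsilon \lVert \ell_f \rVert_\Hs = \E_{x\sim\distp}[\ell_f(x)] + \epsilon \lVert \ell_f \rVert_\Hs$. Composing the two steps yields the first displayed inequality of the corollary for every fixed $f$. Taking the infimum over $f$ on both sides preserves the direction of the inequality, $\inf_f(\text{lhs}) \le \inf_f(\text{rhs})$, giving the second displayed inequality; no attainment of the infimum is required.

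I do not expect a serious obstacle here: the substantive content lives in Theorem~\ref{thm:sup-of-mean-embeddings-norm}. The only points that need care are the interchange $\E_{x\sim\distq}[\ell_f(x)] = \langle \ell_f, \mu_\distq \rangle_\Hs$ (where $\ell_f \in \Hs$ and boundedness of $k$ are used to legitimize moving the expectation inside the inner product), and the observation that~\eqref{eq:bound-mmd-dro-by-mean-embedding} is an inequality rather than an equality because not every element of $\Hs$ is the mean embedding of a probability distribution — so the corollary is stated, correctly, as an upper bound.
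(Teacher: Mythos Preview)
Your proposal is correct and is exactly the argument the paper intends: it states the corollary as an immediate consequence of combining equation~\eqref{eq:bound-mmd-dro-by-mean-embedding} with Theorem~\ref{thm:sup-of-mean-embeddings-norm}, and your write-up simply spells out those two steps (plus the trivial infimum over $f$). There is nothing to add or correct.
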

Combining Corollary~\ref{corollary:bound-dro-by-norm-penalty} with Corollary~\ref{corollary:mmd-dro-principle} shows that minimizing the empirical risk plus a norm on $\ell_f$ leads to a high probability bound on out-of-sample performance.
This result is similar to results that equate Wasserstein DRO to norm regularization.
For example, \citet{gao2017wasserstein} show that under appropriate assumptions on $\ell_f$, DRO with a $p$-Wasserstein ball is asymptotically equivalent to
$\E_{x \sim \hat \distp_n}[\ell_f(x)] + \epsilon \lVert \nabla_x \ell_f \rVert_{\hat\distp_n,q}$,
where $\lVert \nabla_x \ell_f \rVert_{\hat\distp_n,q} = \left(\frac1n \sum_{i=1}^n \lVert \nabla_x \ell_f(x_i) \rVert_*^{q}\right)^{1/q}$ measures a kind of $q$-norm average of $\lVert \nabla_x \ell_f(x_i) \rVert_*$ at each data point $x_i$ (here $q$ is such that $1/p + 1/q = 1$, and $\|\cdot\|_*$ is the dual norm of the metric defining the Wasserstein distance). \sj{Simplify this, I can't fully parse it.}


There are a few key differences between our result and that of~\citet{gao2017wasserstein}. First, the norms are different. 
Second, their result penalizes only the gradient of $\ell_f$, while ours penalizes $\ell_f$ directly.
Third, except for certain special cases, 
the Wasserstein results cannot serve as a true upper bound; there are higher order terms that only shrink to zero as $\epsilon \to 0$. 
Even worse, in high dimension $d$, the radius $\epsilon$ of the uncertainty set needed so that $\distp \in \mathcal{U}$ 
%
shrinks very slowly, as $O(n^{-1/d})$~\citep{Fournier2015}.



\section{Connections to kernel ridge regression}
\label{section:krr}

\matt{consider re-titling the section to be less specific on KRR, which we focus on in the subsection}
%
%
After applying Corollary~\ref{corollary:bound-dro-by-norm-penalty}, we are interested in solving:
\begin{align}
\label{eq:dro-l-composed-with-f}
  \inf_f  \E_{x \sim \hat \distp_n}[\ell_f(x)] + \epsilon \lVert \ell_f \rVert_\Hs.
\end{align}
Here, 
we penalize our model $f$ by $\lVert \ell_f \rVert_\Hs$. 
This looks 
similar to but is 
very different from the usual penalty $\lVert f \rVert_\Hs$ in kernel methods.
In fact, Hilbert norms of function compositions such as $\ell_f$ pose several challenges. For example, $f$ and $\ell_f$ may not belong to the same RKHS -- it is not hard to construct counterexamples, even when $\ell$ is merely quadratic. So, the objective \eqref{eq:dro-l-composed-with-f} is not yet computational.


Despite these challenges, we next develop tools that will allow us to bound $\lVert \ell_f \rVert_\Hs$ and use it as a regularizer. These tools may be of independent interest to bound RKHS norms of composite functions (e.g., for settings as in \citep{bietti2019kernel}).
Due to the difficulty of this task,
we specialize to Gaussian kernels $k_\sigma(x,y) = \exp(-\lVert x - y\rVert^2 / (2\sigma^2))$.
Since we will need to take care regarding the bandwidth $\sigma$, we explicitly write it out for the inner product $\langle \cdot, \cdot \rangle_\sigma$ and norm $\lVert \cdot \rVert_\sigma$, of the corresponding RKHS $H_\sigma$. 


To make the setting concrete,
consider kernel ridge regression, with Gaussian kernel $k_\sigma$.
As usual, we assume there is a simple target function $h$ that fits our data: $h(x_i) = y_i$.
Then the loss $\ell_f$ of $f$ is $\ell_f(x) = (f(x) - h(x))^2$, so we wish to solve
\begin{align}
  \inf_f  \E_{x \sim \hat \distp_n}[(f(x) - h(x))^2] + \epsilon \lVert (f - h)^2 \rVert_\sigma.
  \label{eq:gaussian-krr-wrong-sigma}
\end{align}
\matt{note: if we use $\sigma$ here, we need to change all the $\sigma/\sqrt2$ later to $\sigma$, and $\sigma$ to $\sqrt2 \sigma$}

\subsection{Bounding norms of products}
To bound $\lVert (f - h)^2 \rVert_\sigma$, it will suffice to bound RKHS norms of products.
The key result for this subsection is the following deceptively simple-looking bound:
\begin{theorem}
\label{theorem:gaussian-product-bound}
Let $f, g \in \Hs_\sigma$, that is, the RKHS corresponding to the Gaussian kernel $k_\sigma$ of bandwidth $\sigma$. Then, $\lVert f g \rVert_{\sigma / \sqrt2} \leq \lVert f \rVert_\sigma \lVert g \rVert_\sigma$. 
\end{theorem}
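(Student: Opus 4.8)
The natural approach is to work with an explicit description of the Gaussian RKHS $\Hs_\sigma$ via the Fourier transform (the "Gaussian Bochner" description). For the Gaussian kernel $k_\sigma(x,y) = \exp(-\|x-y\|^2/(2\sigma^2))$ on $\R^d$, one has the Mercer/Fourier characterization: $\Hs_\sigma$ consists of functions $f$ whose Fourier transform $\hat f$ satisfies $\int |\hat f(\omega)|^2 / \hat{k_\sigma}(\omega) \, d\omega < \infty$, and $\|f\|_\sigma^2$ equals (a constant times) that integral, where $\hat{k_\sigma}(\omega) \propto \sigma^d \exp(-\sigma^2 \|\omega\|^2/2)$ is a Gaussian in frequency. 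So $\|f\|_\sigma^2 \propto \int |\hat f(\omega)|^2 \exp(\sigma^2\|\omega\|^2/2)\, d\omega$ up to normalization constants that I will need to track carefully. First I would write down this characterization precisely (with the exact constants), since the whole theorem is about getting a constant-free inequality with the bandwidth changing from $\sigma$ to $\sigma/\sqrt2$.

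Next, the key step: pointwise multiplication $fg$ becomes convolution $\widehat{fg} = \hat f * \hat g$ in frequency. So I would estimate
\begin{align}
  \|fg\|_{\sigma/\sqrt2}^2 &\;\propto\; \int \bigl| (\hat f * \hat g)(\omega) \bigr|^2 \exp\!\Bigl(\tfrac{\sigma^2}{4}\|\omega\|^2\Bigr)\, d\omega .
\end{align}
The convolution is $\int \hat f(\eta)\hat g(\omega - \eta)\, d\eta$, and the crucial elementary inequality is the parallelogram-type bound $\|\eta\|^2 + \|\omega-\eta\|^2 \geq \tfrac12\|\omega\|^2$, equivalently $\tfrac{\sigma^2}{4}\|\omega\|^2 \leq \tfrac{\sigma^2}{2}\|\eta\|^2 + \tfrac{\sigma^2}{2}\|\omega-\eta\|^2$. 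This lets me dominate the Gaussian weight at frequency $\omega$ by the product of the weights at $\eta$ and $\omega - \eta$. Then I would split $\exp(\tfrac{\sigma^2}{4}\|\omega\|^2) \le \exp(\tfrac{\sigma^2}{4}\|\eta\|^2)\exp(\tfrac{\sigma^2}{4}\|\omega-\eta\|^2)$ inside the convolution, write the weighted convolution as a convolution of the two weighted transforms $g_1(\eta) = \hat f(\eta)e^{\sigma^2\|\eta\|^2/4}$ and $g_2 = \hat g\, e^{\sigma^2\|\cdot\|^2/4}$, and apply Young's convolution inequality $\|g_1 * g_2\|_2 \le \|g_1\|_1 \|g_2\|_2$ — or more symmetrically bound $\|g_1*g_2\|_2$ and then use Cauchy–Schwarz to reintroduce the other half of each weight. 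Actually the cleanest route: by Cauchy–Schwarz on the convolution integral, $|(\hat f * \hat g)(\omega)|^2 \le \bigl(\int |\hat f(\eta)|^2 e^{\sigma^2\|\eta\|^2/2} \, d\eta\bigr)\bigl(\int |\hat g(\omega-\eta)|^2 e^{-\sigma^2\|\eta\|^2/2}\,d\eta\bigr)$ would not directly give the weight I want, so I would instead distribute the weight as above and then integrate $\omega$ out, using Tonelli to factor the double integral into $\|f\|_\sigma^2 \cdot \|g\|_\sigma^2$ after a change of variables.

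The main obstacle I anticipate is bookkeeping of normalization constants: the factors of $(2\pi)^{d/2}$, $\sigma^d$, and the $2^{d/2}$ that appears when the bandwidth changes from $\sigma$ to $\sigma/\sqrt 2$ must all cancel exactly for the clean inequality $\|fg\|_{\sigma/\sqrt2} \le \|f\|_\sigma \|g\|_\sigma$ to hold with constant $1$. A good sanity check is the rank-one case $f = k_\sigma(a,\cdot)$, $g = k_\sigma(b,\cdot)$: then $fg = k_\sigma(a,\cdot)k_\sigma(b,\cdot)$ is (a multiple of) $k_{\sigma/\sqrt2}(c,\cdot)$ for the appropriate midpoint-type $c$, so one can compute both sides in closed form and verify the constant is right — this also tells me whether the inequality is tight. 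An alternative, constant-free derivation avoiding Fourier analysis would use the feature-map factorization $\Hs_{\sigma/\sqrt 2} \cong \Hs_\sigma \otimes \Hs_\sigma$ (a known tensor/product structure of Gaussian RKHSs): the product map $(f,g)\mapsto fg$ corresponds to restricting $f \otimes g$ to the diagonal, which is a norm-nonincreasing operation composed with an isometry, giving $\|fg\|_{\sigma/\sqrt2} \le \|f\otimes g\|_{\Hs_\sigma\otimes\Hs_\sigma} = \|f\|_\sigma\|g\|_\sigma$; I would present whichever of the two is cleaner, but the tensor-product argument is morally the slickest and I expect it is the intended proof.
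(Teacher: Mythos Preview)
Your proposal is correct --- in particular the tensor-product argument you sketch at the end is a complete and elegant proof --- but it is genuinely different from what the paper does. The paper works with finite kernel expansions $f = \sum_i a_i k_\sigma(x_i,\cdot)$, $g = \sum_j b_j k_\sigma(x_j,\cdot)$ and, via an explicit Gaussian computation (their Lemma~\ref{lemma:inner-product-quad-product}), establishes the exact trace identities $\lVert fg\rVert_{\sigma/\sqrt2}^2 = \trace(A^2B^2)$, $\lVert f\rVert_\sigma^2 = \trace(A^2)$, $\lVert g\rVert_\sigma^2 = \trace(B^2)$, where $A = \sum_i a_i z_i z_i^T$, $B = \sum_j b_j z_j z_j^T$ for feature vectors $z_i = \phi_{\sqrt2\sigma}(x_i)$; the inequality then follows from $\trace(A^2B^2)\le\trace(A^2)\trace(B^2)$ for symmetric PSD squares. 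Your route instead observes $k_{\sigma/\sqrt2} = k_\sigma^2$, hence $\Hs_{\sigma/\sqrt2}$ is the diagonal restriction of $\Hs_\sigma\otimes\Hs_\sigma$, and applies Aronszajn's restriction contraction to $f\otimes g$. Your argument is cleaner, requires no constant bookkeeping, and generalizes immediately to any kernel $k$ (since $k^2$ is always a kernel by the Schur product theorem). The paper's approach, on the other hand, yields as a byproduct the explicit formula $\lVert f^2\rVert_{\sigma/\sqrt2}^2 = \trace((DK)^4)$ that they actually need later (Section~\ref{subsection:krr-alg-implications}) to compute and differentiate the new regularizer --- something your approach does not directly provide. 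A minor caveat: in your Fourier sketch the weight split $\exp(\tfrac{\sigma^2}{4}\|\omega\|^2) \le \exp(\tfrac{\sigma^2}{4}\|\eta\|^2)\exp(\tfrac{\sigma^2}{4}\|\omega-\eta\|^2)$ is false as written (it would need the $\tfrac{\sigma^2}{2}$ exponents you stated just before), so if you go that route use the correct split and track the constants carefully; the tensor-product argument is preferable.
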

Indeed, there are already subtleties: if $f,g \in \Hs_\sigma$, then, to discuss the norm of the product $f g$, we need to decrease the bandwidth from $\sigma$ to $\sigma / \sqrt2$.

We prove Theorem~\ref{theorem:gaussian-product-bound} via two steps.
First, we represent the functions $f, g$, and $fg$ \emph{exactly} in terms of traces of certain matrices.
This step is highly dependent on the specific structure of the Gaussian kernel.
Then, we can apply standard trace inequalities.
Proofs of both results are given in Appendix~\ref{appendix:gaussian-kernel-bounds}.
\begin{proposition}
\label{prop:write-as-traces}
Let $f, g \in \Hs_\sigma$ have expansions $f = \sum_i a_i k_\sigma(x_i, \cdot)$ and $g = \sum_j b_j k_\sigma(x_j, \cdot)$.
For shorthand denote by $z_i = \phi_{\sqrt2 \sigma}(x_i)$ the (possibly infinite) feature expansion of $x_i$ in $\Hs_{\sqrt2 \sigma}$.
Then,
\begin{align*}
  \lVert f g \rVert_{\sigma / \sqrt2}^2 = \trace(A^2 B^2), 
  \quad
  \lVert f \rVert_\sigma^2 = \trace(A^2), 
  \quad \text{and} \quad \lVert g \rVert_\sigma^2 = \trace(B^2), 
\end{align*}
where $A = \sum_i a_i z_i z_i^T$ and $B = \sum_j a_j z_j z_j^T$.
\end{proposition}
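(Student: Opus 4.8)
The plan is to collapse all three identities into one algebraic fact about products of Gaussian kernels, and then expand everything in the pairwise inner products $r_{pq} := z_p^T z_q = k_{\sqrt2\sigma}(x_p, x_q)$. First I would record the product rule for Gaussian kernel sections: completing the square gives $\lVert x_i - x\rVert^2 + \lVert x_j - x\rVert^2 = \tfrac12\lVert x_i - x_j\rVert^2 + 2\lVert \tfrac{x_i+x_j}{2} - x\rVert^2$, so dividing by $2\sigma^2$ and exponentiating,
\[
  k_\sigma(x_i,\cdot)\,k_\sigma(x_j,\cdot) \;=\; k_{\sqrt2\sigma}(x_i,x_j)\,\cdot\,k_{\sigma/\sqrt2}\!\left(\tfrac{x_i+x_j}{2},\cdot\right).
\]
Writing $m_{ij} := \tfrac12(x_i+x_j)$, this exhibits $fg = \sum_{i,j} a_i b_j\, k_{\sqrt2\sigma}(x_i,x_j)\, k_{\sigma/\sqrt2}(m_{ij},\cdot)$ as a finite linear combination of sections of $k_{\sigma/\sqrt2}$, so $fg\in\Hs_{\sigma/\sqrt2}$ and, by the reproducing property in $\Hs_{\sigma/\sqrt2}$,
\[
  \lVert fg\rVert_{\sigma/\sqrt2}^2 = \sum_{i,j,i',j'} a_i b_j a_{i'} b_{j'}\, k_{\sqrt2\sigma}(x_i,x_j)\, k_{\sqrt2\sigma}(x_{i'},x_{j'})\, k_{\sigma/\sqrt2}(m_{ij},m_{i'j'}).
\]

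The crux is a four-point identity. Expanding every squared norm into inner products shows
\[
  \lVert x_i + x_j - x_{i'} - x_{j'}\rVert^2 + \lVert x_i - x_j\rVert^2 + \lVert x_{i'}-x_{j'}\rVert^2 = \lVert x_i - x_{i'}\rVert^2 + \lVert x_i - x_{j'}\rVert^2 + \lVert x_j - x_{i'}\rVert^2 + \lVert x_j - x_{j'}\rVert^2.
\]
Dividing by $4\sigma^2$ and exponentiating, this says exactly that $k_{\sqrt2\sigma}(x_i,x_j)\,k_{\sqrt2\sigma}(x_{i'},x_{j'})\,k_{\sigma/\sqrt2}(m_{ij},m_{i'j'}) = r_{ii'}\,r_{ij'}\,r_{ji'}\,r_{jj'}$; in particular the two factors $k_{\sqrt2\sigma}(x_i,x_j)$ and $k_{\sqrt2\sigma}(x_{i'},x_{j'})$ cancel the denominator implicit in $k_{\sigma/\sqrt2}(m_{ij},m_{i'j'})$, leaving $\lVert fg\rVert_{\sigma/\sqrt2}^2 = \sum_{i,j,i',j'} a_i b_j a_{i'} b_{j'}\, r_{ii'} r_{ij'} r_{ji'} r_{jj'}$. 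The analogous and easier computations in $\Hs_\sigma$ use $k_\sigma(x,y) = \exp(-\lVert x-y\rVert^2/(2\sigma^2)) = k_{\sqrt2\sigma}(x,y)^2$, so that $\lVert f\rVert_\sigma^2 = \sum_{i,i'} a_i a_{i'} r_{ii'}^2$ and $\lVert g\rVert_\sigma^2 = \sum_{j,j'} b_j b_{j'} r_{jj'}^2$.

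It remains to expand the traces and match. Since the expansions of $f$ and $g$ are finite, $A = \sum_i a_i z_i z_i^T$ and $B = \sum_j b_j z_j z_j^T$ are finite-rank self-adjoint operators, so $A^2$, $B^2$, and $A^2 B^2$ are trace-class; using the rank-one composition $(z_p z_p^T)(z_q z_q^T) = r_{pq}\, z_p z_q^T$ and $\trace(z_p z_q^T) = r_{qp}$ repeatedly gives $\trace(A^2) = \sum_{i,i'} a_i a_{i'} r_{ii'}^2$, $\trace(B^2) = \sum_{j,j'} b_j b_{j'} r_{jj'}^2$, and $\trace(A^2 B^2) = \sum_{i,i',j,j'} a_i a_{i'} b_j b_{j'}\, r_{ii'} r_{i'j} r_{jj'} r_{j'i}$. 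Matching these against the formulas above and using $r_{pq} = r_{qp}$ yields the three claimed equalities.

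I expect the four-point identity, together with the bandwidth bookkeeping needed to make the cancellation happen (keeping straight which Gaussian carries bandwidth $\sigma$, $\sqrt2\sigma$, or $\sigma/\sqrt2$), to be the only genuine obstacle; the rest is routine expansion. One minor subtlety is that elements of $\Hs_\sigma$ need not be finite kernel combinations, but the displayed formulas extend to countable expansions by approximating with finite sums (dense in $\Hs_\sigma$) and invoking continuity of both sides in $\lVert\cdot\rVert_\sigma$.
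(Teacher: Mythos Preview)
Your proposal is correct and follows essentially the same route as the paper: both proofs hinge on the completion-of-square identity $k_\sigma(x_i,\cdot)k_\sigma(x_j,\cdot)=k_{\sqrt2\sigma}(x_i,x_j)\,k_{\sigma/\sqrt2}(m_{ij},\cdot)$ and the four-point norm identity (the paper's Lemmas~B.2 and~B.3), then reduce $\lVert fg\rVert_{\sigma/\sqrt2}^2$ to $\sum_{i,j,i',j'}a_i a_{i'}b_j b_{j'}\,r_{ii'}r_{ij'}r_{ji'}r_{jj'}$ and match this against the expanded trace $\trace(A^2B^2)$. The only differences are cosmetic---you compress the paper's auxiliary lemmas into a single paragraph and add a remark about extending to infinite expansions by density, which the paper does not address.
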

\begin{lemma}
\label{lemma:trace-submultiplicative}
Let $X, Y$ be symmetric and positive semidefinite. Then $\trace(XY) \leq \trace(X)\trace(Y)$.
\end{lemma}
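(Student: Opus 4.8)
The plan is to reduce the matrix inequality to a statement about quadratic forms by spectrally decomposing one of the two matrices. I would write $X = \sum_i \lambda_i u_i u_i^T$ in an orthonormal eigenbasis $\{u_i\}$, with eigenvalues $\lambda_i \geq 0$ (nonnegative because $X$ is positive semidefinite). By linearity and cyclicity of the trace, $\trace(XY) = \sum_i \lambda_i \trace(u_i u_i^T Y) = \sum_i \lambda_i\, u_i^T Y u_i$.

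The key step is then to bound each term $u_i^T Y u_i$. Since $Y$ is positive semidefinite, each such quadratic form is nonnegative, and since $u_i$ is a unit vector, $u_i^T Y u_i \leq \lmax(Y)$; finally $\lmax(Y) \leq \sum_j \lambda_j(Y) = \trace(Y)$ because all eigenvalues of $Y$ are nonnegative, so the largest is dominated by their sum. Substituting, $\trace(XY) \leq \sum_i \lambda_i \trace(Y) = \trace(X)\trace(Y)$, which is the claim.

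The only place the hypotheses are genuinely used — and hence the only obstacle worth flagging — is the chain $u_i^T Y u_i \leq \lmax(Y) \leq \trace(Y)$: it needs positive semidefiniteness of $Y$ (for the eigenvalues to be nonnegative, so the maximum is at most the sum) together with nonnegativity of the coefficients $\lambda_i$ (so that replacing $u_i^T Y u_i$ by the larger quantity $\trace(Y)$ preserves the direction of the inequality). For indefinite $X$ or $Y$ the statement is false, so there is no shortcut avoiding these two facts. An alternative one-line route would be von Neumann's trace inequality $\trace(XY) \leq \sum_i \lambda_i(X)\lambda_i(Y)$ for sorted eigenvalues, followed by $\sum_i \lambda_i(X)\lambda_i(Y) \leq \big(\sum_i \lambda_i(X)\big)\big(\sum_j \lambda_j(Y)\big)$, valid since the discarded cross terms $\lambda_i(X)\lambda_j(Y)$ are all nonnegative; but I would present the self-contained eigenbasis argument as the main proof. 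If needed for the (possibly infinite-dimensional) operators $A^2, B^2$ appearing in Proposition~\ref{prop:write-as-traces}, the same argument applies verbatim to positive trace-class operators via the spectral theorem.
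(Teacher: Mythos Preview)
Your proof is correct, but it takes a different route from the paper's. The paper first applies Cauchy--Schwarz for the trace inner product, $\trace(XY)\le \sqrt{\trace(X^2)\trace(Y^2)}$, and then bounds $\trace(X^2)\le \trace(X)^2$ via the eigenvalue identity $\sum_i \lambda_i^2 \le \big(\sum_i \lambda_i\big)^2$, valid because the cross terms $2\lambda_i\lambda_j$ are nonnegative. Your argument instead spectrally decomposes $X$ and bounds each Rayleigh quotient $u_i^T Y u_i \le \lmax(Y) \le \trace(Y)$ directly. Both approaches ultimately rest on the same fact (nonnegativity of eigenvalues making cross terms or the eigenvalue sum dominate), but yours is a one-step argument that avoids the Cauchy--Schwarz detour and makes the role of the PSD hypotheses more transparent. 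The paper's version has the minor advantage of being assembled from two named, citable inequalities; yours is more self-contained and arguably cleaner. Your remark about extending to trace-class operators is also apt given how the lemma is used downstream with the possibly infinite-rank $A^2,B^2$ from Proposition~\ref{prop:write-as-traces}.
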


With these intermediate results in hand, we can prove the main bound of interest:
\begin{proof}[Proof of Theorem~\ref{theorem:gaussian-product-bound}]
By Proposition~\ref{prop:write-as-traces}, we may write
\begin{align*}
  \lVert f g \rVert_{\sigma / \sqrt2}^2 = \trace(A^2 B^2), 
  \quad
  \lVert f \rVert_\sigma^2 = \trace(A^2), 
  \quad \text{and} \quad \lVert g \rVert_\sigma^2 = \trace(B^2), 
\end{align*}
where $A = \sum_i a_i z_i z_i^T$ and $B = \sum_j b_j z_j z_j^T$ are chosen as described in Proposition~\ref{prop:write-as-traces}.
Since $A$ and $B$ are each symmetric, it follows that $A^2$ and $B^2$ are each symmetric and positive semidefinite. 
Then we can apply Lemma~\ref{lemma:trace-submultiplicative} to conclude that
\begin{equation*}
  \lVert f g \rVert_{\sigma / \sqrt2}^2 = \trace(A^2 B^2) \leq \trace(A^2) \trace(B^2) = \lVert f \rVert_\sigma^2 \lVert g \rVert_\sigma^2. \qedhere
\end{equation*}
\end{proof}

\subsection{Implications: kernel ridge regression}

With the help of Theorem~\ref{theorem:gaussian-product-bound}, we can develop DRO-based bounds for actual learning problems. 
In this section we develop such bounds for Gaussian kernel ridge regression, i.e. problem~\eqref{eq:gaussian-krr-wrong-sigma}. \matt{$\sigma/\sqrt2$}

For shorthand, we write $R_\distq(f) = \E_{x \sim \distq}[\ell_f(x)] = \E_{x\sim\distq}[(f(x) - h(x))^2]$ for the risk
of $f$ on a distribution $\distq$.
Generalization amounts to proving that the population risk $R_\distp(f)$ is not too different than the empirical risk $R_{\hat \distp_n}(f)$.
\begin{theorem}
\label{thm:krr-dro-bound-general}
Assume the target function $h$ satisfies $\lVert h^2 \rVert_{\sigma/\sqrt2} \leq \Lambda_{h^2}$ and $\lVert h \rVert_\sigma \leq \Lambda_h$. Then, for any $\delta > 0$, with probability $1-\delta$, the following holds for all functions $f$ satisfying $\lVert f^2 \rVert_{\sigma/\sqrt2} \leq \Lambda_{f^2}$ and $\lVert f \rVert_\sigma \leq \Lambda_f$:
\begin{align}
  R_\distp(f)
  &\leq R_{\hat \distp_n}(f) + \frac{2}{\sqrt n} \left( 1 + \sqrt{\frac{\log (1/\delta)}{2}}\right) \left( \Lambda_{f^2} + \Lambda_{h^2} + 2 \Lambda_f \Lambda_h \right).
\end{align}

\end{theorem}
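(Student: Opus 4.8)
The plan is to chain together the DRO generalization machinery from Section~\ref{section:implications-generalization} with the product-norm bound of Theorem~\ref{theorem:gaussian-product-bound}. First I would invoke Corollary~\ref{corollary:mmd-dro-principle} with the Gaussian kernel $k_{\sigma/\sqrt2}$ (since the relevant RKHS for the loss $\ell_f = (f-h)^2$ turns out to be $\Hs_{\sigma/\sqrt2}$): with $\epsilon = 2\sqrt{M/n} + \sqrt{2\log(1/\delta)/n}$ and $M = k_{\sigma/\sqrt2}(x,x) = 1$, we get that with probability $1-\delta$, $R_\distp(f) \leq \sup_{\distq : \dMMD(\distq,\hat\distp_n)\leq\epsilon} \E_{x\sim\distq}[\ell_f(x)]$, where here $\dMMD$ is taken with respect to $\Hs_{\sigma/\sqrt2}$. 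Note $\epsilon = \frac{2}{\sqrt n}(1 + \sqrt{\log(1/\delta)/2})$, which already matches the prefactor in the claimed bound. Then I apply Corollary~\ref{corollary:bound-dro-by-norm-penalty} (with $\distp$ there instantiated as $\hat\distp_n$) to bound the right-hand side by $R_{\hat\distp_n}(f) + \epsilon\lVert \ell_f \rVert_{\sigma/\sqrt2} = R_{\hat\distp_n}(f) + \epsilon\lVert (f-h)^2 \rVert_{\sigma/\sqrt2}$.

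The remaining work is to bound $\lVert (f-h)^2 \rVert_{\sigma/\sqrt2}$ in terms of the four quantities $\Lambda_{f^2}, \Lambda_{h^2}, \Lambda_f, \Lambda_h$. I expand $(f-h)^2 = f^2 - 2fh + h^2$ and use the triangle inequality in $\Hs_{\sigma/\sqrt2}$:
\begin{align*}
  \lVert (f-h)^2 \rVert_{\sigma/\sqrt2} \leq \lVert f^2 \rVert_{\sigma/\sqrt2} + \lVert h^2 \rVert_{\sigma/\sqrt2} + 2\lVert fh \rVert_{\sigma/\sqrt2}.
\end{align*}
The first two terms are bounded by $\Lambda_{f^2}$ and $\Lambda_{h^2}$ by hypothesis. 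For the cross term, Theorem~\ref{theorem:gaussian-product-bound} applied to $f, h \in \Hs_\sigma$ gives $\lVert fh \rVert_{\sigma/\sqrt2} \leq \lVert f \rVert_\sigma \lVert h \rVert_\sigma \leq \Lambda_f \Lambda_h$. Substituting, $\lVert (f-h)^2 \rVert_{\sigma/\sqrt2} \leq \Lambda_{f^2} + \Lambda_{h^2} + 2\Lambda_f\Lambda_h$. Combining with the two corollaries above yields the stated inequality.

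The main subtlety — and the step I would be most careful about — is the bookkeeping of bandwidths: the loss $\ell_f = (f-h)^2$ is a product of functions in $\Hs_\sigma$, so by Theorem~\ref{theorem:gaussian-product-bound} it lives in (and we measure its norm in) $\Hs_{\sigma/\sqrt2}$, which is why Corollary~\ref{corollary:mmd-dro-principle} and Corollary~\ref{corollary:bound-dro-by-norm-penalty} must be instantiated with the kernel $k_{\sigma/\sqrt2}$ rather than $k_\sigma$, and why $M=1$ is the correct bound on $k_{\sigma/\sqrt2}(x,x)$. One should also verify that $\ell_f \in \Hs_{\sigma/\sqrt2}$ so that Corollary~\ref{corollary:bound-dro-by-norm-penalty} genuinely applies (as opposed to only the universal-approximation version) — this follows from Proposition~\ref{prop:write-as-traces}, which exhibits $fg$ explicitly as an element of $\Hs_{\sigma/\sqrt2}$ whenever $f,g\in\Hs_\sigma$, together with closure of the RKHS under linear combinations. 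Everything else is the triangle inequality and substitution. A minor point worth stating explicitly is that the probability-$1-\delta$ event comes entirely from the MMD concentration bound (Lemma~\ref{lemma:mmd-convergence}) and holds simultaneously for all $f$, since that event concerns only $\dMMD(\distp, \hat\distp_n)$ and not $f$ — hence the "for all functions $f$" in the statement is legitimate.
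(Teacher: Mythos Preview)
Your proposal is correct and follows essentially the same route as the paper: invoke the MMD concentration (Lemma~\ref{lemma:mmd-convergence}/Corollary~\ref{corollary:mmd-dro-principle}) with the Gaussian kernel at bandwidth $\sigma/\sqrt2$ and $M=1$, apply Corollary~\ref{corollary:bound-dro-by-norm-penalty} to pass to $R_{\hat\distp_n}(f)+\epsilon\lVert (f-h)^2\rVert_{\sigma/\sqrt2}$, expand via the triangle inequality, and control the cross term $\lVert fh\rVert_{\sigma/\sqrt2}$ with Theorem~\ref{theorem:gaussian-product-bound}. Your additional remarks about verifying $\ell_f\in\Hs_{\sigma/\sqrt2}$ and the uniformity of the high-probability event over $f$ are useful clarifications that the paper leaves implicit.
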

\begin{proof}
We utilize the DRO Generalization Principle~\ref{principle:dro-generalization},
By Lemma~\ref{lemma:mmd-convergence} we know that with probability $1-\delta$, $\dMMD(\hat \distp_n, \distp) \leq \epsilon$ for $\epsilon = (2 + \sqrt{2 \log(1/\delta)}) / \sqrt{n}$, since $k_\sigma(x,x) \leq M = 1$.
Note the bandwidth $\sigma$ does not affect the convergence result.
As a result of Lemma~\ref{lemma:mmd-convergence}, with probability $1-\delta$:
\begin{align}
  R_\distp(f)
  &=
  \E_{x \sim \distp}[(f(x) - h(x))^2] \\
  &\overset{(a)}{\leq} \E_{x \sim \hat \distp_n}[(f(x) - h(x))^2] + \epsilon \lVert (f - h)^2 \rVert_{\sigma / \sqrt2} \\
  &\overset{(b)}{\leq} R_{\hat \distp_n}(f) + \epsilon \left( \lVert f^2 \rVert_{\sigma / \sqrt2} + \lVert h^2 \rVert_{\sigma / \sqrt2} + 2\lVert f h \rVert_{\sigma / \sqrt2}\right)  \\
  &\overset{(c)}{\leq} R_{\hat \distp_n}(f) + \epsilon \left( \Lambda_{f^2} + \Lambda_{h^2} + 2 \Lambda_f \Lambda_h \right),
\end{align}
where (a) is by Corollary~\ref{corollary:bound-dro-by-norm-penalty}, (b) is by the triangle inequality, and (c) follows from Theorem~\ref{theorem:gaussian-product-bound} and our assumptions on $f$ and $h$.
Plugging in the bound on $\epsilon$ yields the result.
\end{proof}
We placed different bounds on each of $f, h, f^2, h^2$ to emphasize the dependence on each.
Since each is bounded separately, the DRO based bound in Theorem~\ref{thm:krr-dro-bound-general}
allows finer control of the complexity of the function class than is typical.
Since, by Theorem~\ref{theorem:gaussian-product-bound}, the norms of $f^2$, $h^2$ and $f h$ are bounded by those of $f$ and $h$, we may also state Theorem~\ref{thm:krr-dro-bound-general} just with $\lVert f \rVert_\sigma$ and $\lVert h \rVert_\sigma$.
%
\begin{corollary}
Assume the target function $h$ satisfies $\lVert h \rVert_\sigma \leq \Lambda$. Then, for any $\delta > 0$, with probability $1-\delta$, the following holds for all functions $f$ satisfying $\lVert f \rVert_{\sigma} \leq \Lambda$:
\begin{align}
  R_\distp(f)
  &\leq R_{\hat \distp_n}(f) + \frac{8 \Lambda^2}{\sqrt n} \left( 1 + \sqrt{\frac{\log (1/\delta)}{2}}\right).
\end{align}
\end{corollary}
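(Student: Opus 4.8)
This corollary is a direct specialization of Theorem~\ref{thm:krr-dro-bound-general}, so the plan is simply to instantiate the parameters $\Lambda_f, \Lambda_h, \Lambda_{f^2}, \Lambda_{h^2}$ appropriately and simplify the resulting constant. First I would set $\Lambda_f = \Lambda_h = \Lambda$, which is permitted since we assume $\lVert h \rVert_\sigma \leq \Lambda$ and restrict attention to $f$ with $\lVert f \rVert_\sigma \leq \Lambda$.

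Next I would invoke Theorem~\ref{theorem:gaussian-product-bound} with the choice $f = g$ (and separately $f = g = h$) to control the norms of the squared functions: it gives $\lVert f^2 \rVert_{\sigma/\sqrt2} \leq \lVert f \rVert_\sigma^2 \leq \Lambda^2$ and likewise $\lVert h^2 \rVert_{\sigma/\sqrt2} \leq \Lambda^2$. Hence we may take $\Lambda_{f^2} = \Lambda_{h^2} = \Lambda^2$, and both hypotheses of Theorem~\ref{thm:krr-dro-bound-general} are satisfied for every admissible $f$ simultaneously (the high-probability event from Lemma~\ref{lemma:mmd-convergence} is a single event independent of $f$).

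Plugging these values into the bound of Theorem~\ref{thm:krr-dro-bound-general}, the factor in parentheses becomes $\Lambda_{f^2} + \Lambda_{h^2} + 2\Lambda_f \Lambda_h = \Lambda^2 + \Lambda^2 + 2\Lambda^2 = 4\Lambda^2$, so the additive term is $\frac{2}{\sqrt n}\bigl(1 + \sqrt{\log(1/\delta)/2}\bigr)\cdot 4\Lambda^2 = \frac{8\Lambda^2}{\sqrt n}\bigl(1 + \sqrt{\log(1/\delta)/2}\bigr)$, which is exactly the claimed inequality. There is no real obstacle here; the only thing to be careful about is that the quantifier order ("for all $f$") is inherited correctly from Theorem~\ref{thm:krr-dro-bound-general}, which holds because the probabilistic event is the MMD concentration event, uniform over $f$.
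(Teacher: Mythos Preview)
Your proposal is correct and follows essentially the same approach as the paper's own proof: reduce to Theorem~\ref{thm:krr-dro-bound-general}, use Theorem~\ref{theorem:gaussian-product-bound} to set $\Lambda_{f^2}=\Lambda_{h^2}=\Lambda^2$ and $\Lambda_f=\Lambda_h=\Lambda$, and simplify $\Lambda^2+\Lambda^2+2\Lambda\cdot\Lambda=4\Lambda^2$. Your explicit remark about the uniformity in $f$ of the probabilistic event is a welcome addition.
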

\begin{proof}
We reduce to Theorem~\ref{thm:krr-dro-bound-general}.
By Theorem~\ref{theorem:gaussian-product-bound}, we know that $\lVert f^2 \rVert_{\sigma / \sqrt2} \leq \lVert f \rVert_{\sigma}^2$, which may be bounded above by $\Lambda^2$ (and similarly for $h$).
Therefore we can take $\Lambda_{f^2} = \Lambda_f^2 = \Lambda$ and $\Lambda_{h^2} = \Lambda_h^2 = \Lambda$ in Theorem~\ref{thm:krr-dro-bound-general}.
The result follows by bounding
\begin{equation*}
  \Lambda_{f^2} + \Lambda_{h^2} + 2 \Lambda_f \Lambda_h
  \leq \Lambda^2 + \Lambda^2 + 2 \Lambda \cdot \Lambda = 4 \Lambda^2. \qedhere
\end{equation*}
\end{proof}

Generalization bounds for kernel ridge regression are of course not new;
we emphasize that the DRO viewpoint provides an intuitive approach that also grants finer control over the function complexity.
Moreover, our results take essentially the same form as the typical generalization bounds for kernel ridge regression, reproduced below:
\begin{theorem}[Specialized from~\citep{mohri2018foundations}, Theorem 10.7]
\label{theorem:krr-mohri}
Assume the target function $h$ satisfies $\lVert h \rVert_\sigma \leq \Lambda$. 
Then, for any $\delta > 0$, with probability $1-\delta$, it holds for all functions $f$ satisfying $\lVert f \rVert_\sigma \leq \Lambda$ that
\begin{align}
  R_\distp(f) &\leq R_{\hat \distp_n}(f) + \frac{8 \Lambda^2}{\sqrt n} \left( 1 + \frac12 \sqrt{\frac{\log (1/\delta)}{2}}\right).
\end{align}
\end{theorem}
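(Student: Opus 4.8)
The plan is to recover this bound as the standard Rademacher-complexity argument for kernel ridge regression, specialized to the Gaussian kernel via $k_\sigma(x,x) = 1$; this is exactly the specialization of~\citep{mohri2018foundations} that the statement advertises, so one option is simply to quote their Theorem 10.7 with kernel bound equal to $1$. For self-containedness I would instead sketch the underlying argument, where the work is entirely in tracking constants rather than in any new idea.

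First I would bound the range of the loss. Because $k_\sigma(x,x) = 1$, the reproducing property gives $|f(x)| = |\langle f, k_\sigma(x,\cdot)\rangle_\sigma| \le \lVert f \rVert_\sigma \le \Lambda$ for every admissible $f$, and likewise $|h(x)| \le \Lambda$, so $\ell_f(x) = (f(x) - h(x))^2 \in [0, 4\Lambda^2]$ uniformly over $x$ and over the admissible $f$; hence the loss class $\mathcal G = \{x \mapsto (f(x) - h(x))^2 : \lVert f \rVert_\sigma \le \Lambda\}$ is bounded by $M = 4\Lambda^2$. Next I would bound its Rademacher complexity $\mathfrak{R}_n(\mathcal G)$ in two steps: (i) the classical RKHS-ball bound $\mathfrak{R}_n(\{f : \lVert f \rVert_\sigma \le \Lambda\}) \le \frac{\Lambda}{n}\sqrt{\sum_i k_\sigma(x_i,x_i)} = \Lambda/\sqrt{n}$, which holds for every sample and is unchanged when the fixed function $h$ is subtracted from each element of the class; and (ii) Talagrand's contraction lemma applied to $u \mapsto u^2$, which is $4\Lambda$-Lipschitz on the interval $[-2\Lambda, 2\Lambda]$ containing the range of $f - h$, giving $\mathfrak{R}_n(\mathcal G) \le 4\Lambda \cdot \Lambda/\sqrt{n} = 4\Lambda^2/\sqrt{n}$.

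Finally I would invoke the textbook uniform-convergence bound: for a class of $[0,M]$-valued functions, with probability $1-\delta$ every member $g$ satisfies $\E_{x\sim\distp}[g(x)] \le \E_{x\sim\hat\distp_n}[g(x)] + 2\mathfrak{R}_n(\mathcal G) + M\sqrt{\log(1/\delta)/(2n)}$. Substituting $M = 4\Lambda^2$ and the bound on $\mathfrak{R}_n(\mathcal G)$ gives $R_\distp(f) \le R_{\hat\distp_n}(f) + 8\Lambda^2/\sqrt{n} + 4\Lambda^2\sqrt{\log(1/\delta)/(2n)}$, and pulling $8\Lambda^2/\sqrt{n}$ out of the last two terms produces exactly the claimed expression. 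The only thing to be careful about --- the ``main obstacle,'' such as it is --- is getting the contraction constant right ($4\Lambda$ rather than $2\Lambda$, since the relevant interval is $[-2\Lambda,2\Lambda]$, not $[-\Lambda,\Lambda]$) and pairing it correctly with the RKHS-ball constant so that the numerical factors $8$ and $\tfrac12$ come out precisely as stated rather than slightly loose; one should also use the version of the uniform-convergence bound written with the distribution-dependent Rademacher complexity (legitimate here because the step-(i) bound holds uniformly over samples) to avoid the worse constants of the empirical-Rademacher version.
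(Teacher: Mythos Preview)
Your proposal is correct, and the constants are tracked correctly. Note, however, that the paper does not actually prove this theorem: it is quoted verbatim from~\citep{mohri2018foundations} purely as a point of comparison with the authors' DRO-based bound, and no proof appears anywhere in the paper or its appendix. Your sketch is precisely the standard Rademacher-plus-contraction argument that underlies the cited textbook result, so in that sense you have reproduced the approach of the original reference rather than diverged from it.
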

Hence, our DRO-based Theorem~\ref{thm:krr-dro-bound-general} evidently recovers standard results up to a universal constant.

\subsection{Algorithmic implications}
\label{subsection:krr-alg-implications}
The generalization result in Theorem~\ref{theorem:krr-mohri} is often used to justify penalizing by the norm $\lVert f \rVert_{\sigma}$, since it is the only part of the bound (other than the risk $R_{\hat \distp_n}(f)$) that depends on $f$.
In contrast, our DRO-based generalization bound in Theorem~\ref{thm:krr-dro-bound-general} is of the form
\begin{align}
  R_\distp(f) - R_{\hat \distp_n}(f) &\leq \epsilon \left( \lVert f^2 \rVert_{\sigma / \sqrt2} + \lVert h^2 \rVert_{\sigma / \sqrt2} + 2 \lVert f \rVert_\sigma \lVert h \rVert_\sigma \right),
\end{align}
which depends on $f$ through both norms $\lVert f \rVert_\sigma$ and $\lVert f^2 \rVert_{\sigma / \sqrt2}$.
This bound motivates the use of both norms as regularizers in kernel regression, i.e. we would instead solve
\begin{align}
  \inf_{f \in \Hs_\sigma} \, \E_{(x,y) \sim \hat \distp_n}[(f(x) - y)^2] + \lambda_1 \lVert f \rVert_\sigma + \lambda_2 \lVert f^2 \rVert_{\sigma / \sqrt2}. 
\end{align}
Given data $(x_i, y_i)_{i=1}^n$, for kernel ridge regression, the Representer Theorem implies that it is sufficient to consider only $f$ of the form $f = \sum_{i=1}^n a_i k_\sigma(x_i, \cdot)$.
%
Here this is not in general possible due to the norm of $f^2$.
However, it is possible to evaluate and compute gradients of $\lVert f^2 \rVert_{\sigma / \sqrt2}^2$: 
let $K$ be the matrix with $K_{ij} = k_{\sqrt2 \sigma}(x_i, x_j)$, and let $D = \diag(a)$. 
Using Proposition~\ref{prop:write-as-traces}, we can prove 
$\lVert f^2 \rVert_{\sigma/\sqrt2}^2 = \trace( (DK)^4)$
A complete proof is given 
in the appendix.



\section{Approximation and connections to variance regularization}
\label{section:variance-reg}

In the previous section we studied bounding the MMD DRO problem~\eqref{eq:mmd-dro-prob} via Hilbert norm penalization.
Going beyond kernel methods where we search over $f \in \Hs$, it is even less clear how to evaluate the Hilbert norm $\lVert \ell_f \rVert_\Hs$.
To circumvent this issue, next we approach the DRO problem from a different angle: we directly search for the adversarial distribution $\distq$. 
Along the way, we will build connections to variance regularization~\citep{maurer2009empirical,gotoh2015robust,lam2016robust,Namkoong2017Variance}, where the empirical risk is regularized by the empirical variance of $\ell_f$: $\Var_{\hat \distp_n}(\ell_f) = \E_{x\sim\hat\distp_n}[\ell_f(x)^2] - \E_{x\sim\hat\distp_n}[\ell_f(x)]^2$.
In particular, we show in Theorem~\ref{thm:stronger-than-variance} that MMD DRO yields stronger regularization than variance.

Searching over all distributions $\distq$ in the MMD ball is intractable, so we restrict our attention to those with the same support $\{x_i\}_{i=1}^n$ as the empirical sample $\hat \distp_n$.
All such distributions $\distq$ can be written as $\distq = \sum_{i=1}^n w_i \delta_{x_i}$, where $w$ is in the $n$-dimensional simplex.
By restricting the set of candidate distributions $\distq$, we make the adversary weaker:
\begin{align}
  \begin{array}{ll}
    \sup_\distq & \E_{x\sim\distq}[\ell_f(x)] \\
    \text{s.t.} & \dMMD(\distq, \hat \distp_n) \leq \epsilon
  \end{array}
  \quad \geq \quad
  \begin{array}{ll}
    \sup_w & \sum_{i=1}^n w_i \ell_f(x_i) \\
    \text{s.t.} & \dMMD(\sum_{i=1}^n w_i \delta_{x_i}, \hat \distp_n) \leq \epsilon \\
    & \sum_{i=1}^n w_i = 1 \\
    & w_i \geq 0 \; \forall i=1,\dots,n.
    \label{eq:discrete-mmd-dro-problem}
  \end{array}
\end{align}
By restricting the support of $\distq$, it is no longer possible to guarantee out of sample performance, since it typically will have different support.
Yet, as we will see, problem~\eqref{eq:discrete-mmd-dro-problem} has nice connections.

The $\dMMD$ constraint is a quadratic penalty on $v = w - \frac1n \mathbf1$, as one may see via 
the mean embedding definition of MMD:
\begin{align}
  \dMMD\left(\sum_{i=1}^n w_i \delta_{x_i}, \hat \distp_n \right)^2
  &= \left\lVert \sum_{i=1}^n w_i k(x_i, \cdot) - \frac1n \sum_{i=1}^n k(x_i, \cdot) \right\rVert_\Hs^2 
  &= \left\lVert \sum_{i=1}^n v_i k(x_i, \cdot) \right\rVert_\Hs^2.
\end{align}
The last term is $v^T K v = (w - \frac1n \mathbf 1)^T K (w - \frac1n \mathbf 1)$, where $K$ is the kernel matrix with $K_{ij} = k(x_i,x_j)$.
If the radius $\epsilon$ of the uncertainty set is small enough, the constraints $w_i \geq 0$ are inactive, and can be ignored. 
By dropping these constraints, we can solve the adversary's problem in closed form:
\matt{originally: Dropping these constraints makes it possible to solve the adversary's problem in closed form:}
\begin{lemma}
\label{lem:discrete-opt-solution}
Let $\vec \ell$ be the vector with $i$-th element $\ell_f(x_i)$.
If $\epsilon$ is small enough that the constraints $w_i$ are not active, then the optimal value of problem~\eqref{eq:discrete-mmd-dro-problem} is given by
\begin{align}
  \E_{x\sim\hat\distp_n}[\ell_f(x)] + \epsilon \sqrt{ \vec\ell^T K^{-1} \vec\ell - \frac{(\vec\ell^T K^{-1} \mathbf 1)^2}{\mathbf 1^T K^{-1} \mathbf 1} }.
\end{align}
\end{lemma}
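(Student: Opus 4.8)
The plan is to reduce Lemma~\ref{lem:discrete-opt-solution} to a quadratically constrained \emph{linear} program and then solve that program exactly with Lagrange multipliers. First I would substitute $w = \frac1n\mathbf1 + v$. The objective $\sum_i w_i \ell_f(x_i)$ becomes $\E_{x\sim\hat\distp_n}[\ell_f(x)] + v^T\vec\ell$, the normalization constraint $\sum_i w_i = 1$ becomes $\mathbf1^T v = 0$, and — as already computed in the text preceding the lemma — the MMD constraint becomes $v^T K v \le \epsilon^2$. Dropping the $w_i\ge 0$ constraints, which are assumed inactive, the problem is to maximize $v^T\vec\ell$ subject to $\mathbf1^T v = 0$ and $v^T K v \le \epsilon^2$: a linear functional maximized over the intersection of a hyperplane through the origin with an ellipsoid (full-dimensional, since $K\succ0$ for the Gaussian kernel on distinct points $x_i$). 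This feasible set is compact and convex, so the KKT conditions are necessary and sufficient for global optimality.

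Next I would form the Lagrangian $L(v,\lambda,\gamma) = v^T\vec\ell - \lambda\,\mathbf1^T v - \tfrac{\gamma}{2}\bigl(v^T K v - \epsilon^2\bigr)$. Stationarity in $v$ gives $v = \tfrac1\gamma K^{-1}(\vec\ell - \lambda\mathbf1)$. Imposing $\mathbf1^T v = 0$ pins down $\lambda = (\mathbf1^T K^{-1}\vec\ell)/(\mathbf1^T K^{-1}\mathbf1)$; in other words $\lambda\mathbf1$ is the $K^{-1}$-orthogonal projection of $\vec\ell$ onto $\Span(\mathbf1)$. The active constraint $v^T K v = \epsilon^2$ then fixes $\gamma = \tfrac1\epsilon\sqrt{(\vec\ell - \lambda\mathbf1)^T K^{-1}(\vec\ell - \lambda\mathbf1)}$. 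Substituting back, and using $\mathbf1^T K^{-1}(\vec\ell - \lambda\mathbf1) = 0$ to replace $(\vec\ell - \lambda\mathbf1)^T K^{-1}\vec\ell$ by $(\vec\ell - \lambda\mathbf1)^T K^{-1}(\vec\ell - \lambda\mathbf1)$, the optimal value of the reduced program collapses neatly to $\gamma\epsilon^2 = \epsilon\sqrt{(\vec\ell - \lambda\mathbf1)^T K^{-1}(\vec\ell - \lambda\mathbf1)}$. Expanding this quadratic form with the value of $\lambda$ gives $\epsilon\sqrt{\vec\ell^T K^{-1}\vec\ell - (\vec\ell^T K^{-1}\mathbf1)^2/(\mathbf1^T K^{-1}\mathbf1)}$, and adding back the constant $\E_{x\sim\hat\distp_n}[\ell_f(x)]$ produces exactly the claimed expression.

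There is no heavy computation here; the only point needing care is justifying that this stationary point is the global maximizer and that the quadratic constraint is active. Both follow because we maximize a linear functional over a compact convex set whose only non-flat boundary piece is the ellipsoid, together with $K\succ0$ so that $K^{-1}$ (and hence $\gamma>0$) are well defined; the degenerate case $\vec\ell\in\Span(\mathbf1)$ makes the optimal value trivially $\E_{x\sim\hat\distp_n}[\ell_f(x)]$, which the formula also returns. I would also note that the same answer drops out of a one-line geometric argument: the change of variables $u = K^{1/2}v$ turns the reduced program into maximizing $u^T K^{-1/2}\vec\ell$ over $\|u\|_2\le\epsilon$ with $u$ orthogonal to $K^{-1/2}\mathbf1$, whose value is $\epsilon$ times the norm of the component of $K^{-1/2}\vec\ell$ orthogonal to $K^{-1/2}\mathbf1$, and that orthogonal-component norm equals $\sqrt{\vec\ell^T K^{-1}\vec\ell - (\vec\ell^T K^{-1}\mathbf1)^2/(\mathbf1^T K^{-1}\mathbf1)}$.
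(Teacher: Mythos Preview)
Your proposal is correct and follows essentially the same Lagrangian route as the paper: the paper also substitutes $v = w - \tfrac1n\mathbf1$, introduces multipliers for the equality and the quadratic constraint, solves for $v^*$ in closed form, obtains the same $\lambda^* = (\mathbf 1^T K^{-1}\vec\ell)/(\mathbf 1^T K^{-1}\mathbf 1)$, and then balances the remaining terms; the only cosmetic difference is that the paper phrases it as strong duality (Slater) and solves the dual, while you invoke KKT directly on the primal. Your additional one-line geometric argument via $u = K^{1/2}v$ is a nice bonus not present in the paper.
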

In other words, fitting a model to minimize the support-constrained approximation of MMD DRO is equivalent to penalizing by the nonconvex regularizer in Lemma~\ref{lem:discrete-opt-solution}.
To better understand this regularizer, consider, for instance, the case that the kernel matrix $K$ equals the identity $I$.
This will happen e.g. for a Gaussian kernel as the bandwidth $\sigma$ approaches zero. 
Then, the regularizer equals
\begin{align}
  \epsilon \sqrt{ \vec\ell^T K^{-1} \vec\ell - \frac{(\vec\ell^T K^{-1} \mathbf 1)^2}{\mathbf 1^T K^{-1} \mathbf 1} }
  = \epsilon \sqrt{ \vec\ell^T \vec\ell - \frac{(\vec\ell^T \mathbf 1)^2}{\mathbf 1^T \mathbf 1} }
  &= \epsilon \sqrt{n} \sqrt{ \Var_{\hat \distp_n}(\ell_f) }.
\end{align}

In fact, this equivalence holds a bit more generally:
\begin{lemma}
\label{lemma:var-reg-id-plus-ones}
Let $K = a I + b \mathbf 1 \mathbf 1^T$. Then,
$
  \sqrt{\vec\ell^T K^{-1} \vec\ell - \frac{(\vec\ell^T K^{-1} \mathbf 1)^2}{\mathbf 1^T K^{-1} \mathbf 1}}
  = a^{-1/2} \sqrt{n} \sqrt{ \Var_{\hat \distp_n}(\ell_f)}.
$
\end{lemma}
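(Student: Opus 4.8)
The plan is to invert $K = aI + b\mathbf 1\mathbf 1^T$ in closed form via the Sherman--Morrison formula, substitute into the regularizer, and observe that every term depending on $b$ cancels, leaving exactly $a^{-1}$ times the (unnormalized) empirical variance. Since $K$ is a kernel matrix we may take $a>0$ and $a+bn>0$, so all inverses below are well defined. Note the special case $a=1$, $b=0$ recovers $K=I$, discussed just before the lemma.

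First I would write, using $\mathbf 1^T\mathbf 1 = n$,
\begin{align*}
  K^{-1} = \frac1a\left(I - \frac{b}{a+bn}\mathbf 1\mathbf 1^T\right) =: \frac1a\left(I - c\,\mathbf 1\mathbf 1^T\right), \qquad c := \frac{b}{a+bn},
\end{align*}
and note $1 - cn = a/(a+bn) > 0$. Then I would compute the three scalars appearing in the regularizer:
\begin{align*}
  \vec\ell^T K^{-1}\vec\ell = \frac1a\left(\lVert\vec\ell\rVert^2 - c(\mathbf 1^T\vec\ell)^2\right), \quad
  \vec\ell^T K^{-1}\mathbf 1 = \frac1a(\mathbf 1^T\vec\ell)(1-cn), \quad
  \mathbf 1^T K^{-1}\mathbf 1 = \frac{n}{a}(1-cn).
\end{align*}

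Next I would form the expression under the square root. The ratio simplifies, using $1-cn>0$, as
\begin{align*}
  \frac{(\vec\ell^T K^{-1}\mathbf 1)^2}{\mathbf 1^T K^{-1}\mathbf 1} = \frac{(1/a^2)(\mathbf 1^T\vec\ell)^2(1-cn)^2}{(n/a)(1-cn)} = \frac{(\mathbf 1^T\vec\ell)^2(1-cn)}{an},
\end{align*}
so that
\begin{align*}
  \vec\ell^T K^{-1}\vec\ell - \frac{(\vec\ell^T K^{-1}\mathbf 1)^2}{\mathbf 1^T K^{-1}\mathbf 1}
  &= \frac1a\lVert\vec\ell\rVert^2 - \frac{c}{a}(\mathbf 1^T\vec\ell)^2 - \frac{(\mathbf 1^T\vec\ell)^2}{an} + \frac{c(\mathbf 1^T\vec\ell)^2}{a}
  = \frac1a\left(\lVert\vec\ell\rVert^2 - \frac{(\mathbf 1^T\vec\ell)^2}{n}\right).
\end{align*}
Finally, I would recognize $\lVert\vec\ell\rVert^2 - \tfrac1n(\mathbf 1^T\vec\ell)^2 = n\big(\tfrac1n\sum_i\ell_f(x_i)^2 - (\tfrac1n\sum_i\ell_f(x_i))^2\big) = n\Var_{\hat\distp_n}(\ell_f)$ straight from the definition, and take the square root to obtain $a^{-1/2}\sqrt n\sqrt{\Var_{\hat\distp_n}(\ell_f)}$.

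The argument is essentially bookkeeping with no deep step; the main thing to get right is the cancellation of the two $c(\mathbf 1^T\vec\ell)^2/a$ terms, which is also a good sanity check: the final answer must be independent of $b$ (equivalently of $c$), since $b$ only rescales the all-ones direction, which is annihilated by the mean-subtraction built into the regularizer. One minor point to handle carefully is keeping $1-cn = a/(a+bn)$ manifestly positive when simplifying the ratio, so that no sign or division-by-zero issue arises.
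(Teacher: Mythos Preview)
Your proposal is correct and follows essentially the same route as the paper's proof: invert $K$ via Sherman--Morrison, compute the three scalars $\vec\ell^T K^{-1}\vec\ell$, $\vec\ell^T K^{-1}\mathbf 1$, and $\mathbf 1^T K^{-1}\mathbf 1$, plug in, and watch the $b$-dependent terms cancel to leave $a^{-1}\big(\lVert\vec\ell\rVert^2 - (\mathbf 1^T\vec\ell)^2/n\big) = a^{-1} n\,\Var_{\hat\distp_n}(\ell_f)$. Your introduction of the shorthand $c = b/(a+bn)$ and the explicit observation $1-cn = a/(a+bn)>0$ make the cancellation a bit more transparent than in the paper, but the argument is the same.
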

\matt{you can get this consequence for MMD vs $\chi^2$ without requiring $\epsilon$ small}
As a consequence, we conclude that with the right choice of kernel $k$, MMD DRO is a stronger regularizer than variance:
\begin{theorem}
\label{thm:stronger-than-variance}
There exists a kernel $k$ so that MMD DRO bounds the variance regularized problem:
\begin{align}
  \E_{x\sim\hat\distp_n}[\ell_f(x)]
  \quad \leq \quad
  \E_{x\sim\hat\distp_n}[\ell_f(x)] + \epsilon \sqrt{n} \sqrt{\Var_{\hat\distp_n}(\ell_f)}
  \quad \leq \quad
  \sup_{\distq : \dMMD(\distq, \hat \distp_n) \leq \epsilon}[\ell_f(x)].
\end{align}
\end{theorem}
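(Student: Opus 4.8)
The plan is to handle the two inequalities separately. The left one is immediate, since $\Var_{\hat\distp_n}(\ell_f)\geq 0$ and $\epsilon\geq 0$. For the right one I would exhibit a single kernel $k$ whose kernel matrix on the sample $x_1,\dots,x_n$ is the identity, and then read off the induced regularizer from the machinery already in place: the support restriction of inequality~\eqref{eq:discrete-mmd-dro-problem}, the closed form of Lemma~\ref{lem:discrete-opt-solution}, and the simplification of Lemma~\ref{lemma:var-reg-id-plus-ones}.

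Concretely, I would take the Dirac kernel $k(x,y) = \mathbf 1\{x=y\}$. This is positive definite --- its feature map sends $x$ to the indicator $\mathbf 1\{x=\cdot\}\in\ell^2$, so that $k(x,y) = \langle\mathbf 1\{x=\cdot\},\mathbf 1\{y=\cdot\}\rangle$ --- and it satisfies $k(x,x)=1$, consistent with Lemma~\ref{lemma:mmd-convergence}. For distinct samples (the generic case) its kernel matrix is $K = I$. Now: (i) restricting the DRO adversary to distributions supported on $\{x_i\}_{i=1}^n$ only weakens it, which is exactly inequality~\eqref{eq:discrete-mmd-dro-problem}, so the MMD DRO value is at least the optimal value of the support-restricted problem~\eqref{eq:discrete-mmd-dro-problem}; (ii) taking $\epsilon$ small enough that the constraints $w_i\geq 0$ are inactive --- the hypothesis of Lemma~\ref{lem:discrete-opt-solution} --- that optimal value equals $\E_{x\sim\hat\distp_n}[\ell_f(x)] + \epsilon\sqrt{\vec\ell^T K^{-1}\vec\ell - (\vec\ell^T K^{-1}\mathbf 1)^2/(\mathbf 1^T K^{-1}\mathbf 1)}$; (iii) since $K = I$ is of the form $aI + b\mathbf 1\mathbf 1^T$ with $a=1$, Lemma~\ref{lemma:var-reg-id-plus-ones} collapses the square root to $\sqrt n\,\sqrt{\Var_{\hat\distp_n}(\ell_f)}$. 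Chaining (i)--(iii) gives $\sup_{\distq:\dMMD(\distq,\hat\distp_n)\leq\epsilon}\E_{x\sim\distq}[\ell_f(x)] \geq \E_{x\sim\hat\distp_n}[\ell_f(x)] + \epsilon\sqrt n\,\sqrt{\Var_{\hat\distp_n}(\ell_f)}$, which is the right inequality.

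The step that needs the most care is not a computation but a hypothesis: Lemma~\ref{lem:discrete-opt-solution} only yields that clean closed form when $\epsilon$ is below a (data-dependent) threshold, so the statement above inherits that small-$\epsilon$ restriction; for larger $\epsilon$ one still obtains a lower bound on the DRO value, but possibly looser than $\epsilon\sqrt n\,\sqrt{\Var_{\hat\distp_n}(\ell_f)}$. A secondary point is to confirm that $k$ is a genuine kernel on the whole input space, not just on the sample --- the Dirac kernel handles this --- and, if strict positive definiteness or the possibility of coincident samples is a concern, one can instead use $k(x,y) = \mathbf 1\{x=y\} + b$ for any $b\geq 0$, whose kernel matrix $I + b\mathbf 1\mathbf 1^T$ still has $a=1$ and so gives the same bound through Lemma~\ref{lemma:var-reg-id-plus-ones}.
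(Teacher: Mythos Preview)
Your proposal is correct and follows exactly the route the paper intends: the theorem is stated immediately after Lemma~\ref{lemma:var-reg-id-plus-ones} as a direct consequence of the chain \eqref{eq:discrete-mmd-dro-problem} $\to$ Lemma~\ref{lem:discrete-opt-solution} $\to$ Lemma~\ref{lemma:var-reg-id-plus-ones}, and the paper gives no separate proof in the appendix. The only cosmetic difference is the choice of kernel: the paper gestures at a Gaussian kernel with $\sigma\to 0$ to force $K=I$, whereas you pick the Dirac kernel, which is cleaner since it actually realizes $K=I$ for distinct samples rather than only in a limit. Your observation that the argument inherits the small-$\epsilon$ hypothesis of Lemma~\ref{lem:discrete-opt-solution} is accurate and is a caveat the paper leaves implicit.
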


\sj{Put the variance regularizer in context? Where has it been used?}


\section{Experiments}
\begin{figure}
\centering
	\includegraphics[width=3.2in]{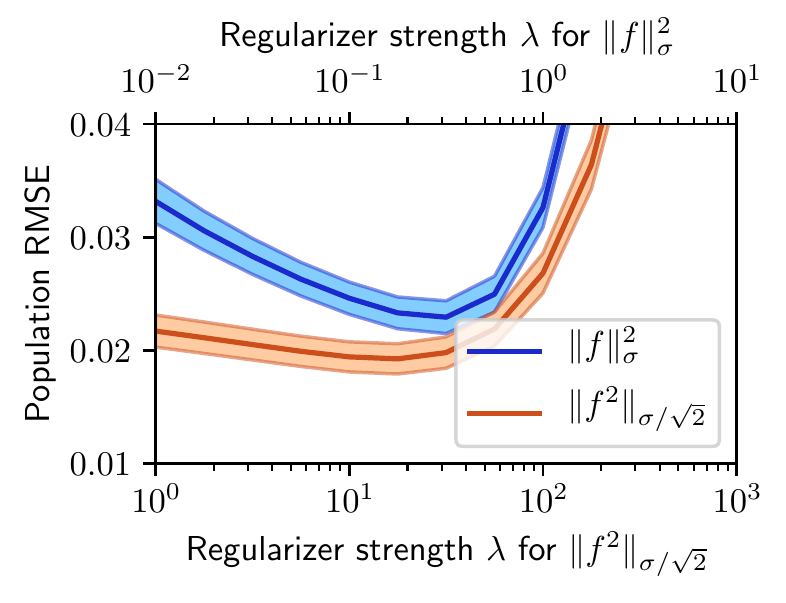}
	\includegraphics[width=3.2in]{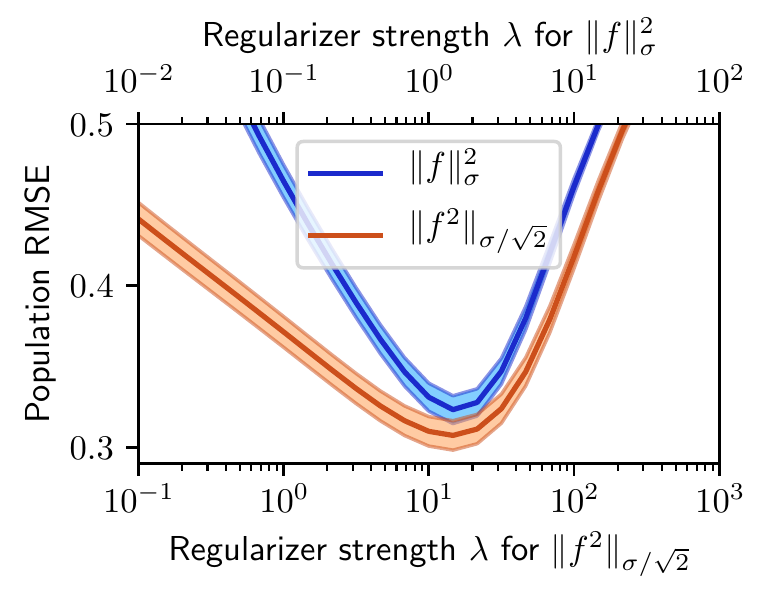}
	\vspace{-0.125in}
  \caption{Comparison of the two regularizers $\lVert f \rVert_\sigma^2$ and $\lVert f^2 \rVert_{\sigma/\sqrt2}$ in both the easy (left) and hard (right) settings, across a parameter sweep of $\lambda$. The $x$-axis is shifted to make comparison easier. }
  \label{fig:krr-new-regularizer-compare}
\end{figure}

In subsection~\ref{subsection:krr-alg-implications} we proposed an alternate regularizer for kernel ridge regression, specifically, penalizing $\lVert f^2 \rVert_{\sigma/\sqrt2}$ instead of $\lVert f \rVert_\sigma^2$.
Here we probe the new regularizer on a synthetic problem where we can precisely compute the population risk $R_\distp(f)$.
Consider the Gaussian kernel $k_\sigma$ with $\sigma=1$. 
Fix the ground truth $h = k_\sigma(1, \cdot) - k_\sigma(-1,\cdot) \in \Hs_\sigma$.
Sample $10^4$ points from a standard one dimensional Gaussian, and set this as the population $\distp$.
Then subsample $n$ points $x_i = h(x_i) + \epsilon_i$, where $\epsilon_i$ are Gaussian.
We consider both an easy regime, where $n=10^3$ and $\Var(\epsilon_i) = 10^{-2}$, and a hard regime where $n=10^2$ and $\Var(\epsilon_i) = 1$.
On the empirical data, we fit $f\in\Hs_\sigma$ by minimizing square loss plus either $\lambda \lVert f \rVert_\sigma^2$ (as is typical) or $\lambda \lVert f^2 \rVert_{\sigma/\sqrt2}$ (our proposal).
We average over $10^2$ resampling trials for the easy case and $10^3$ for the hard case, and report 95\% confidence intervals.
Figure~\ref{fig:krr-new-regularizer-compare} shows the result in each case for a parameter sweep over $\lambda$.
If $\lambda$ is tuned properly, the tighter regularizer $\lVert f^2 \rVert_{\sigma/\sqrt2}$ yields better performance in both cases.
It also appears the regularizer $\lVert f^2 \rVert_{\sigma/\sqrt2}$ is less sensitive to the choice of $\lambda$: performance decays slowly when $\lambda$ is too low.

\subsubsection*{Acknowledgements}
This work was supported by The Defense Advanced Research Projects Agency (grant number YFA17 N66001-17-1-4039). The views, opinions, and/or findings contained in this article are those of the author and should not be interpreted as representing the official views or policies, either expressed or implied, of the Defense Advanced Research Projects Agency or the Department of Defense.
We thank Cameron Musco and Joshua Robinson for helpful conversations, and Marwa El Halabi and Sebastian Claici for comments on the draft.

\bibliographystyle{plainnatnourl}
\bibliography{ref}

\appendix


\section{Proofs of main structural results}
\label{appendix:structural-results-proofs}

\begin{proof}[Proof of Theorem~\ref{thm:sup-of-mean-embeddings-norm}]
We will use weak duality to derive a candidate solution, and then use that solution to show strong duality.
First, note that
\begin{align}
  \sup_{\mu_\distq \in \Hs : \lVert \mu_\distq - \mu_\distp \rVert_\Hs \leq \epsilon} \langle f, \mu_\distq \rangle_\Hs
  &= \sup_{\mu_\distq \in \Hs} \inf_{\lambda \geq 0} \left\{ \langle f, \mu_\distq \rangle_\Hs - \lambda ( \lVert \mu_\distq - \mu_\distp \rVert_\Hs^2 - \epsilon^2) \right\} \\
  &\leq \inf_{\lambda \geq 0} \sup_{\mu_\distq \in \Hs} \left\{ \langle f, \mu_\distq \rangle_\Hs - \lambda ( \lVert \mu_\distq - \mu_\distp \rVert_\Hs^2 - \epsilon^2) \right\} \\
  &= \inf_{\lambda \geq 0} \left\{ \lambda \epsilon^2 + \sup_{\mu_\distq \in \Hs} \left\{ \langle f, \mu_\distq \rangle_\Hs - \lambda \lVert \mu_\distq - \mu_\distp \rVert_\Hs^2 \right\} \right\}.
\end{align}
We first focus on the innermost objective, which may be rewritten:
\begin{align}
  \langle f, \mu_\distq \rangle_\Hs - \lambda \lVert \mu_\distq - \mu_\distp \rVert_\Hs^2 
  &= \langle f, \mu_\distp \rangle_\Hs + \langle f, \mu_\distq - \mu_\distp \rangle_\Hs - \lambda \lVert \mu_\distq - \mu_\distp \rVert_\Hs^2 \\
  &= \langle f, \mu_\distp \rangle_\Hs - \lambda \left[ \lVert \mu_\distq - \mu_\distp \rVert_\Hs^2 - 2 \left\langle \frac{1}{2\lambda} f, \mu_\distq - \mu_\distp \right\rangle_\Hs \right] \\
  &= \langle f, \mu_\distp \rangle_\Hs - \lambda \left[ \left\lVert \mu_\distq - \mu_\distp - \frac{1}{2\lambda} f\right\rVert_\Hs^2 + \left\lVert \frac{1}{2\lambda} f \right\rVert_\Hs^2 \right],
\end{align}
where the final inequality is by completing the square.
Only one term depends on $\mu_\distq$, namely $-\lambda \lVert \mu_\distq - \mu_\distp - \frac{1}{2\lambda}f \rVert_\Hs^2$;
since norms are nonnegative, this term can never exceed zero, and zero is achieved by $\mu_\distq^* = \mu_\distp + \frac{1}{2\lambda} f \in \Hs$, yielding inner objective value
\begin{align}
  \langle f, \mu_\distp \rangle_\Hs - \lambda \left\lVert \frac{1}{2\lambda} f \right\rVert_\Hs^2 
  = \langle f, \mu_\distp \rangle_\Hs - \frac{1}{4\lambda} \lVert f \rVert_\Hs^2.
\end{align}
Plugging this in for the inner problem, and then solving for the optimal dual variable $\lambda^*$, we derive the upper bound:
\begin{align}
  \sup_{\mu_\distq \in \Hs : \lVert \mu_\distq - \mu_\distp \rVert_\Hs \leq \epsilon} \langle f, \mu_\distq \rangle_\Hs
  &\leq \inf_{\lambda \geq 0} \left\{ \lambda \epsilon^2 + \langle f, \mu_\distp \rangle_\Hs + \frac{1}{4\lambda} \lVert f \rVert^2_\Hs \right\} \\
  &= \langle f, \mu_\distp \rangle_\Hs + \epsilon \lVert f \rVert_\Hs.
\end{align}
The optimal dual variable $\lambda^* = \frac{1}{2\epsilon} \lVert f \rVert_\Hs$ is that which balances the two terms. Plugging this in, we find that $\mu_\distq^* = \mu_\distp + \frac{\epsilon}{\lVert f \rVert_\Hs} f$.

In order to prove equality, it remains to show strong duality holds. We will achieve this by lower bounding the original objective. Specifically, the supremum over all $\mu_\distq$ can be lower bounded by plugging in our particular $\mu_\distq^*$:
\begin{align}
  \sup_{\mu_\distq \in \Hs : \lVert \mu_\distq - \mu_\distp \rVert_\Hs \leq \epsilon} \langle f, \mu_\distq \rangle_\Hs
  &= \sup_{\mu_\distq \in \Hs} \inf_{\lambda \geq 0} \left\{ \langle f, \mu_\distq \rangle_\Hs - \lambda ( \lVert \mu_\distq - \mu_\distp \rVert_\Hs^2 - \epsilon^2) \right\} \\  
  &\geq \inf_{\lambda \geq 0} \left\{ \langle f, \mu_\distq^* \rangle_\Hs - \lambda ( \lVert \mu_\distq^* - \mu_\distp \rVert_\Hs^2 - \epsilon^2) \right\} \\  
  &= \inf_{\lambda \geq 0} \left\{ \left\langle f, \mu_\distp + \frac{\epsilon}{\lVert f \rVert_\Hs} f \right\rangle_\Hs - \lambda \left( \left\lVert \frac{\epsilon}{\lVert f \rVert_\Hs} f \right\rVert_\Hs^2 - \epsilon^2 \right) \right\} \\  
  &= \inf_{\lambda \geq 0} \left\{ \left\langle f, \mu_\distp + \frac{\epsilon}{\lVert f \rVert_\Hs} f \right\rangle_\Hs - \lambda \left( \epsilon^2 - \epsilon^2 \right) \right\} \\  
  &= \left\langle f, \mu_\distp + \frac{\epsilon}{\lVert f \rVert_\Hs} f \right\rangle_\Hs
  = \langle f, \mu_\distp \rangle_\Hs + \epsilon \lVert f \rVert_\Hs.
\end{align}
Since the same bound appears on both sides, we have equality.
\end{proof}


\section{Gaussian kernel bounds}
\label{appendix:gaussian-kernel-bounds}

We first reproduce Proposition~\ref{prop:write-as-traces} for convenience:
\begin{proposition}
Let $f, g \in \Hs_\sigma$ have the expansions $f = \sum_i a_i k_\sigma(x_i, \cdot)$ and $g = \sum_j b_j k_\sigma(x_j, \cdot)$.
For shorthand denote by $z_i = \phi_{\sqrt2 \sigma}(x_i)$ the (possibly infinite) feature expansion of $x_i$ in $\Hs_{\sqrt2\sigma}$.
Then 
\begin{align*}
  \lVert f g \rVert_{\sigma / \sqrt2}^2 = \trace(A^2 B^2), 
  \quad
  \lVert f \rVert_\sigma^2 = \trace(A^2), 
  \quad \text{and} \quad \lVert g \rVert_\sigma^2 = \trace(B^2), 
\end{align*}
where $A = \sum_i a_i z_i z_i^T$ and $B = \sum_j a_j z_j z_j^T$.
\end{proposition}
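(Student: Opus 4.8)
The plan is to reduce all three identities to explicit sums over the expansion coefficients and to invoke two elementary facts about the Gaussian kernel. The first is the \emph{product identity}, obtained by completing the square: for any $x,y,u$,
\[
  k_\sigma(x,u)\,k_\sigma(y,u) \;=\; k_{\sqrt2\sigma}(x,y)\;k_{\sigma/\sqrt2}\!\left(\tfrac{x+y}{2},\,u\right),
\]
since $\|x-u\|^2 + \|y-u\|^2 = 2\bigl\|u - \tfrac{x+y}{2}\bigr\|^2 + \tfrac12\|x-y\|^2$. The second is that squaring a Gaussian kernel halves the bandwidth-squared: $k_{\sqrt2\sigma}(x,y)^2 = k_\sigma(x,y)$, immediate from the definition. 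I will also use the elementary polarization identity that for any four vectors $p,q,r,s$,
\[
  \|p-q\|^2 + \|r-s\|^2 + \|p+q-r-s\|^2 \;=\; \|p-r\|^2 + \|r-q\|^2 + \|q-s\|^2 + \|s-p\|^2,
\]
both sides equalling $2(\|p\|^2+\|q\|^2+\|r\|^2+\|s\|^2) - 2(p\cdot r + p\cdot s + q\cdot r + q\cdot s)$.

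For the two single-function identities: by the reproducing property in $\Hs_\sigma$, $\lVert f\rVert_\sigma^2 = \sum_{i,j} a_i a_j\, k_\sigma(x_i,x_j)$. On the trace side, each $z_i z_i^T$ is rank one, so cyclicity of the trace gives $\trace(z_i z_i^T z_j z_j^T) = (z_i^T z_j)^2 = k_{\sqrt2\sigma}(x_i,x_j)^2$, whence $\trace(A^2) = \sum_{i,j} a_i a_j\, k_{\sqrt2\sigma}(x_i,x_j)^2$. These agree by the squaring fact, and the argument for $\lVert g\rVert_\sigma^2 = \trace(B^2)$ is identical.

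For the product identity: applying the product identity termwise to the pointwise product $fg = \sum_{i,j} a_i b_j\, k_\sigma(x_i,\cdot)\,k_\sigma(x_j,\cdot)$ rewrites $fg$ as a finite linear combination $\sum_{i,j} a_i b_j\, k_{\sqrt2\sigma}(x_i,x_j)\, k_{\sigma/\sqrt2}(\tfrac{x_i+x_j}{2},\cdot)$ of kernel sections in $\Hs_{\sigma/\sqrt2}$, which in particular certifies $fg \in \Hs_{\sigma/\sqrt2}$. Expanding its squared norm via the reproducing property in $\Hs_{\sigma/\sqrt2}$ gives a four-index sum whose $(i,j,k,l)$ term has weight $a_i a_k b_j b_l$ and equals that weight times $\exp\bigl(-\tfrac{1}{4\sigma^2}(\|x_i-x_j\|^2 + \|x_k-x_l\|^2 + \|x_i+x_j-x_k-x_l\|^2)\bigr)$, after collecting the bandwidths. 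On the other side, expanding $A^2$ and $B^2$ into rank-one pieces and using cyclicity of the trace gives $\trace(A^2 B^2) = \sum_{i,j,k,l} a_i a_k b_j b_l\, (z_i^T z_k)(z_k^T z_j)(z_j^T z_l)(z_l^T z_i)$, whose $(i,j,k,l)$ term has the same weight times $\exp\bigl(-\tfrac{1}{4\sigma^2}(\|x_i-x_k\|^2 + \|x_k-x_j\|^2 + \|x_j-x_l\|^2 + \|x_l-x_i\|^2)\bigr)$. The polarization identity, applied with $(p,q,r,s)=(x_i,x_j,x_k,x_l)$, equates the two exponents term by term, so the sums coincide.

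The main obstacle is the bookkeeping in the product case: tracking which indices pair up after the product identity is applied and recognizing the resulting quadratic form as an instance of the polarization identity. A secondary point to handle carefully is that the $z_i$ may live in an infinite-dimensional feature space, so $A$ and $B$ should be read as finite-rank self-adjoint operators; the trace of a product of finitely many finite-rank operators is well-defined and cyclic, so no analytic subtlety arises beyond recording this. I will also take the expansions of $f$ and $g$ to be finite sums, as they are in the uses made of this proposition; the general case follows by a routine limiting argument.
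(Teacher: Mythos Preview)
Your proposal is correct and follows essentially the same route as the paper. Both arguments rest on the same three ingredients: the completing-the-square identity $k_\sigma(x,u)k_\sigma(y,u)=k_{\sqrt2\sigma}(x,y)\,k_{\sigma/\sqrt2}(\tfrac{x+y}{2},u)$, the squaring relation $k_{\sqrt2\sigma}^2=k_\sigma$, and the quadratic identity $\|p-q\|^2+\|r-s\|^2+\|p+q-r-s\|^2=\|p-r\|^2+\|p-s\|^2+\|q-r\|^2+\|q-s\|^2$. The only organizational difference is that the paper packages the first and third identities into a separate lemma computing $\langle k_\sigma(x,\cdot)k_\sigma(y,\cdot),\,k_\sigma(a,\cdot)k_\sigma(b,\cdot)\rangle_{\sigma/\sqrt2}=k_{\sqrt2\sigma}(x,a)k_{\sqrt2\sigma}(x,b)k_{\sqrt2\sigma}(y,a)k_{\sqrt2\sigma}(y,b)$ and then reduces $\trace(A^2B^2)$ to the resulting four-index sum by iterated collapsing of rank-one pieces, whereas you expand both $\lVert fg\rVert_{\sigma/\sqrt2}^2$ and $\trace(A^2B^2)$ directly as four-index sums of Gaussians and match their exponents via the quadratic identity; the computations are the same up to reordering.
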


In order to prove Proposition~\ref{prop:write-as-traces}, we will need a utility lemma that helps translate between $\Hs_\sigma$ and $\Hs_{\sigma / \sqrt2}$:
\begin{lemma}
\label{lemma:inner-product-quad-product}
  Let $\langle \cdot , \cdot \rangle_{\sigma / \sqrt2}$ be the inner product in the RKHS $\Hs_{\sigma / \sqrt2}$. Let $\langle \cdot, \cdot \rangle_{\sigma'}$ refer to the inner product in $H_{\sigma'}$. Then,
  \begin{align}
    \label{eq:kernel-prod-inner-prod}
    \langle k_\sigma(x, \cdot) k_\sigma(y, \cdot), k_\sigma(a, \cdot) k_\sigma(b, \cdot) \rangle_{\sigma / \sqrt2}
  \end{align}
  can be simplified as
  \begin{align}
    k_{\sigma \sqrt2}(x, a) k_{\sigma \sqrt2}(x, b) k_{\sigma \sqrt2}(y, a) k_{\sigma \sqrt2}(y, b).
  \end{align}
\end{lemma}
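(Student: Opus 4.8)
The plan is to exploit the defining feature of Gaussian kernels: the pointwise product of two bandwidth-$\sigma$ Gaussians is, up to a scalar, a single Gaussian of bandwidth $\sigma/\sqrt2$ centered at the midpoint. For any points $x,y$ and any evaluation point $t$,
\[
  k_\sigma(x,t)\,k_\sigma(y,t) = \exp\!\left(-\frac{\|x-t\|^2 + \|y-t\|^2}{2\sigma^2}\right),
\]
and completing the square through $\|x-t\|^2 + \|y-t\|^2 = 2\big\|t - \tfrac{x+y}{2}\big\|^2 + \tfrac12\|x-y\|^2$ yields
\[
  k_\sigma(x,\cdot)\,k_\sigma(y,\cdot) \;=\; k_{\sigma\sqrt2}(x,y)\; k_{\sigma/\sqrt2}\!\left(\tfrac{x+y}{2},\,\cdot\right),
\]
using that $\exp(-\|u-v\|^2/\sigma^2) = k_{\sigma/\sqrt2}(u,v)$ and $\exp(-\|x-y\|^2/(4\sigma^2)) = k_{\sigma\sqrt2}(x,y)$. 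Besides carrying the main computational content, this identity also justifies writing the left-hand side of \eqref{eq:kernel-prod-inner-prod} at all: it exhibits the product as an element of $\Hs_{\sigma/\sqrt2}$.

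Applying this to both factors, \eqref{eq:kernel-prod-inner-prod} becomes
\[
  k_{\sigma\sqrt2}(x,y)\,k_{\sigma\sqrt2}(a,b)\;\Big\langle k_{\sigma/\sqrt2}\!\big(\tfrac{x+y}{2},\cdot\big),\, k_{\sigma/\sqrt2}\!\big(\tfrac{a+b}{2},\cdot\big)\Big\rangle_{\sigma/\sqrt2},
\]
and the reproducing property in $\Hs_{\sigma/\sqrt2}$ collapses the remaining inner product to $k_{\sigma/\sqrt2}\!\big(\tfrac{x+y}{2},\tfrac{a+b}{2}\big)$. Hence \eqref{eq:kernel-prod-inner-prod} equals $k_{\sigma\sqrt2}(x,y)\,k_{\sigma\sqrt2}(a,b)\,k_{\sigma/\sqrt2}\!\big(\tfrac{x+y}{2},\tfrac{a+b}{2}\big)$.

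It then remains to check that this matches the claimed form $k_{\sigma\sqrt2}(x,a)k_{\sigma\sqrt2}(x,b)k_{\sigma\sqrt2}(y,a)k_{\sigma\sqrt2}(y,b)$. Taking logarithms and using $k_{\sigma\sqrt2}(u,v) = \exp(-\|u-v\|^2/(4\sigma^2))$ and $k_{\sigma/\sqrt2}(u,v) = \exp(-\|u-v\|^2/\sigma^2)$, together with $\tfrac{1}{\sigma^2}\big\|\tfrac{(x+y)-(a+b)}{2}\big\|^2 = \tfrac{1}{4\sigma^2}\|(x-a)+(y-b)\|^2$, the claim reduces after clearing the common factor $-\tfrac{1}{4\sigma^2}$ to the elementary identity
\[
  \|x-y\|^2 + \|a-b\|^2 + \|(x-a)+(y-b)\|^2 = \|x-a\|^2 + \|x-b\|^2 + \|y-a\|^2 + \|y-b\|^2,
\]
which follows by expanding every squared norm into inner products and cancelling. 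I expect the only real obstacle to be bookkeeping: keeping the three bandwidths $\sigma$, $\sigma\sqrt2$, $\sigma/\sqrt2$ straight, and carrying out the final quadratic-form verification cleanly; there is no analytic difficulty once the product-of-Gaussians reduction is in place.
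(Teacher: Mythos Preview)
Your proposal is correct and follows essentially the same route as the paper: reduce $k_\sigma(x,\cdot)k_\sigma(y,\cdot)$ to a scalar times a single $k_{\sigma/\sqrt2}$ feature via completing the square, apply the reproducing property, and then verify the resulting quadratic-form identity $\|x-y\|^2+\|a-b\|^2+\|(x+y)-(a+b)\|^2=\|x-a\|^2+\|x-b\|^2+\|y-a\|^2+\|y-b\|^2$ by expansion. The paper merely packages the two algebraic steps as separate lemmas, but the argument is the same.
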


In order to make the proof cleaner, we first derive a couple of identities involving norms and sums.
\begin{lemma}
\label{lemma:norms-sums-identity-1}
For vectors $x, y, z$, the following identity holds:
\begin{align}
  \lVert x - z \rVert^2 + \lVert y - z \rVert^2
  = \frac12 \lVert x - y \rVert^2 + 2 \left\lVert z - \frac{x+y}{2} \right\rVert^2
\end{align}
\end{lemma}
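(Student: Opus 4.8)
The statement to prove is the algebraic identity
\[
  \lVert x - z \rVert^2 + \lVert y - z \rVert^2
  = \tfrac12 \lVert x - y \rVert^2 + 2 \left\lVert z - \tfrac{x+y}{2} \right\rVert^2 .
\]
This is a parallelogram-type identity, so the plan is simply to expand both sides in terms of inner products and verify they agree; there is no real obstacle beyond bookkeeping. I would introduce the midpoint $m = \tfrac{x+y}{2}$ to keep the notation compact and work with $\langle \cdot, \cdot \rangle$ and $\lVert \cdot \rVert^2 = \langle \cdot, \cdot \rangle$ on whatever inner product space the vectors live in.

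First I would expand the left-hand side: $\lVert x - z\rVert^2 + \lVert y - z\rVert^2 = \lVert x\rVert^2 + \lVert y\rVert^2 + 2\lVert z\rVert^2 - 2\langle z, x + y\rangle$. Next I would expand the right-hand side using $\lVert z - m\rVert^2 = \lVert z\rVert^2 - 2\langle z, m\rangle + \lVert m\rVert^2$ and $\lVert x - y\rVert^2 = \lVert x\rVert^2 + \lVert y\rVert^2 - 2\langle x, y\rangle$, giving $\tfrac12\lVert x\rVert^2 + \tfrac12\lVert y\rVert^2 - \langle x,y\rangle + 2\lVert z\rVert^2 - 4\langle z, m\rangle + 2\lVert m\rVert^2$. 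Substituting $m = \tfrac{x+y}{2}$ so that $4\langle z, m\rangle = 2\langle z, x+y\rangle$ and $2\lVert m\rVert^2 = \tfrac12\lVert x+y\rVert^2 = \tfrac12\lVert x\rVert^2 + \tfrac12\lVert y\rVert^2 + \langle x, y\rangle$, the $-\langle x,y\rangle$ and $+\langle x,y\rangle$ cancel and the $\lVert x\rVert^2, \lVert y\rVert^2$ coefficients combine to $1$ each, matching the left-hand side term by term. Hence the two sides are equal.

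Alternatively, and perhaps more cleanly for the writeup, I would substitute $u = x - z$ and $v = y - z$, so the left-hand side is $\lVert u\rVert^2 + \lVert v\rVert^2$, while $x - y = u - v$ and $z - \tfrac{x+y}{2} = -\tfrac{u+v}{2}$, reducing the claim to the standard polarization/parallelogram fact $\lVert u\rVert^2 + \lVert v\rVert^2 = \tfrac12\lVert u - v\rVert^2 + \tfrac12\lVert u + v\rVert^2$, which follows immediately from expanding $\lVert u \pm v\rVert^2 = \lVert u\rVert^2 \pm 2\langle u, v\rangle + \lVert v\rVert^2$ and adding. There is no delicate step here; the only thing to be careful about is the factor of $2$ on the last term of the right-hand side, which is exactly what makes the substitution $z - \tfrac{x+y}{2} = -\tfrac{u+v}{2}$ (with its $\tfrac12$ inside the norm, squared to $\tfrac14$, times $2$) come out right.
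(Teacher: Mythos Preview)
Your proposal is correct and essentially matches the paper's own proof: the paper expands the left-hand side as $\lVert x\rVert^2 + \lVert y\rVert^2 + 2\lVert z\rVert^2 - 2 z^T(x+y)$ and then completes the square in $z$ to arrive at the right-hand side, which is the same bookkeeping you carry out in your first method. Your alternative substitution $u = x-z$, $v = y-z$ reducing to the parallelogram law is a slightly slicker but equivalent variant.
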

\begin{proof}
Simply expand:
\begin{align}
  \lVert x - z \rVert^2 + \lVert y - z \rVert^2
  &= \lVert x \rVert^2 + \lVert y \rVert^2 + 2 \lVert z \rVert^2 - 2 z^T (x+y) \\
  &= \lVert x \rVert^2 + \lVert y \rVert^2 + 2 \left( \lVert z \rVert^2 - 2 z^T \left(\frac{x+y}{2}\right) \right) \\
  &= \lVert x \rVert^2 + \lVert y \rVert^2 + 2 \left( \left\lVert z - \frac{x+y}{2} \right\rVert^2 - \left\lVert \frac{x+y}{2} \right\rVert^2 \right) \\
  &= \lVert x \rVert^2 + \lVert y \rVert^2 + 2 \left\lVert z - \frac{x+y}{2} \right\rVert^2 - \frac12 \lVert x+y \rVert^2 \\
  &= \frac12 \lVert x - y \rVert^2 + 2 \left\lVert z - \frac{x+y}{2} \right\rVert^2. \qedhere
\end{align}
\end{proof}

\begin{lemma}
\label{lemma:norms-sums-identity-2}
Let $x, y, a, b$ be arbitrary vectors, and define $S$ and $T$ by:
\begin{align*}
  S &:= \lVert x - y \rVert^2 + \lVert a - b \rVert^2 + \lVert (x+y) - (a+b) \rVert^2 \\
  T &:= \lVert x - a \rVert^2 + \lVert x - b \rVert^2 + \lVert y - a \rVert^2 + \lVert y - b \rVert^2.
\end{align*}
Then $S = T$.
\end{lemma}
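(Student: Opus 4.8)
The plan is to treat this as a routine polarization identity: both $S$ and $T$ are quadratic forms in $x,y,a,b$, so it suffices to expand each in terms of inner products and check the coefficients match. First I would expand the three terms of $S$ via $\lVert u - v \rVert^2 = \lVert u \rVert^2 + \lVert v \rVert^2 - 2\langle u, v \rangle$. The key cancellation is that the cross term $-2\langle x, y\rangle$ from $\lVert x - y\rVert^2$ is killed by the $+2\langle x,y\rangle$ hiding inside $\lVert (x+y)-(a+b)\rVert^2$, and similarly the $\langle a, b\rangle$ cross terms cancel, leaving $S = 2(\lVert x\rVert^2 + \lVert y\rVert^2 + \lVert a\rVert^2 + \lVert b\rVert^2) - 2(\langle x,a\rangle + \langle x,b\rangle + \langle y,a\rangle + \langle y,b\rangle)$. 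Expanding the four terms of $T$ the same way produces literally the same expression, so $S = T$.

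A slightly slicker alternative is to invoke Lemma~\ref{lemma:norms-sums-identity-1} twice. Applying it with $z = a$ and with $z = b$ and summing gives $T = \lVert x - y\rVert^2 + 2\lVert a - \tfrac{x+y}{2}\rVert^2 + 2\lVert b - \tfrac{x+y}{2}\rVert^2$. Then applying Lemma~\ref{lemma:norms-sums-identity-1} once more to the last two terms, with the roles of $x,y,z$ played by $a, b, \tfrac{x+y}{2}$, rewrites $2\lVert a - \tfrac{x+y}{2}\rVert^2 + 2\lVert b - \tfrac{x+y}{2}\rVert^2$ as $\lVert a - b\rVert^2 + 4\lVert \tfrac{x+y}{2} - \tfrac{a+b}{2}\rVert^2$. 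Since $\tfrac{x+y}{2} - \tfrac{a+b}{2} = \tfrac12\big((x+y)-(a+b)\big)$, the last term is exactly $\lVert (x+y)-(a+b)\rVert^2$, and we obtain $T = S$.

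There is no genuine obstacle here — the identity is elementary. The only thing to be careful about is bookkeeping: tracking the cross terms in the direct expansion, or correctly propagating the factors of $\tfrac12$ and $2$ through the two applications of Lemma~\ref{lemma:norms-sums-identity-1} in the second route. I expect this lemma to be used, alongside Lemma~\ref{lemma:norms-sums-identity-1}, to simplify the exponent that arises when multiplying four Gaussian kernel evaluations, en route to Lemma~\ref{lemma:inner-product-quad-product}.
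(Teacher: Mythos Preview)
Your proposal is correct. Your first route---expanding everything to inner products and matching coefficients---is essentially what the paper does, though the paper organizes the expansion slightly more economically by first writing $(x+y)-(a+b)=(x-a)+(y-b)$, which immediately peels off $\lVert x-a\rVert^2$ and $\lVert y-b\rVert^2$ from the third term of $S$, and then checks that the leftover $\lVert x-y\rVert^2 + \lVert a-b\rVert^2 + 2(x-a)^T(y-b)$ equals $\lVert x-b\rVert^2 + \lVert y-a\rVert^2$.

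Your second route, applying Lemma~\ref{lemma:norms-sums-identity-1} three times, is not in the paper but is a perfectly valid and arguably tidier alternative: it avoids any raw expansion and makes the two lemmas fit together as a single package feeding into Lemma~\ref{lemma:inner-product-quad-product}. The paper's direct expansion is a bit shorter on the page; your iterated use of Lemma~\ref{lemma:norms-sums-identity-1} is more structural. Either is fine here.
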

\begin{proof}
Start by expanding the third term of $S$:
\begin{align}
  &\lVert x - y \rVert^2 + \lVert a - b \rVert^2 + \lVert (x+y) - (a+b) \rVert^2 \\
  = \; &\lVert x - y \rVert^2 + \lVert a - b \rVert^2 + \lVert (x-a) + (y-b) \rVert^2 \\
  = \; &\lVert x - y \rVert^2 + \lVert a - b \rVert^2 + 2 (x-a)^T (y-b) + \lVert x - a \rVert^2 + \lVert y - b \rVert^2.
  \label{eq:expanded-quad-norm-sum}
\end{align}
The first three terms of equation~\eqref{eq:expanded-quad-norm-sum} can be expanded as
\begin{align}
  &\lVert x - y \rVert^2 + \lVert a - b \rVert^2 + 2 (x-a)^T (y-b) \\
  = \; &\lVert x \rVert^2 + \lVert y \rVert^2 - 2x^T y + \lVert a \rVert^2 + \lVert b \rVert^2 - 2a^T b + 2 (x-a)^T (y-b) \\
  = \; &\lVert x \rVert^2 + \lVert y \rVert^2 - 2x^T y + \lVert a \rVert^2 + \lVert b \rVert^2 - 2a^T b + 2 x^T y - 2x^T b - 2a^T y + 2a^T b \\
  = \; &\lVert x \rVert^2 + \lVert y \rVert^2 + \lVert a \rVert^2 + \lVert b \rVert^2  - 2x^T b - 2a^T y \\
  = \; &\lVert x - b \rVert^2 + \lVert y - a \rVert^2.
\end{align}
Replacing the first three terms in equation~\eqref{eq:expanded-quad-norm-sum} by $\lVert x - b \rVert^2 + \lVert y -a \rVert^2$ yields $T$, i.e. $S = T$.
\end{proof}

We are now equipped to prove Lemma~\ref{lemma:inner-product-quad-product}:
\begin{proof}[Proof of Lemma~\ref{lemma:inner-product-quad-product}]
First, write
\begin{align}
  k_\sigma(x,z) k_\sigma(y,z) &= \exp\left(-\frac{1}{2\sigma^2} \left( \lVert x - z \rVert^2 + \lVert y - z \rVert^2 \right)\right) \\
  &= \exp\left(-\frac{1}{2\sigma^2} \left( \frac12 \lVert x - y \rVert^2 + 2 \left\lVert z - \frac{x+y}{2} \right\rVert^2 \right) \right) \\
  &= \exp\left(-\frac{1}{4\sigma^2} \lVert x - y \rVert^2 \right) \exp\left( -\frac{1}{\sigma^2}\left\lVert z - \frac{x+y}{2} \right\rVert^2 \right) \\
  &= k_{\sigma \sqrt2}(x,y) k_{\sigma / \sqrt2}\left(z, \frac{x+y}{2}\right),
\end{align}
where in the second line we used Lemma~\ref{lemma:norms-sums-identity-1}.
Note that the first term does not depend on $z$. Now, applying this identity to Equation~\eqref{eq:kernel-prod-inner-prod}, we find:
\begin{align}
  &\langle k_\sigma(x, \cdot) k_\sigma(y, \cdot), k_\sigma(a, \cdot) k_\sigma(b, \cdot) \rangle_{\sigma / \sqrt2} \\
  = \; &k_{\sigma \sqrt2}(x,y) k_{\sigma \sqrt2}(a,b) \left\langle k_{\sigma/\sqrt2}\left(\frac{x+y}{2}, \cdot \right),  k_{\sigma/\sqrt2}\left(\frac{a+b}{2}, \cdot \right) \right\rangle_{\sigma / \sqrt2} \\
  = \; &k_{\sigma \sqrt2}(x,y) k_{\sigma \sqrt2}(a,b) k_{\sigma/\sqrt2}\left( \frac{x+y}{2}, \frac{a+b}{2} \right) \\
  = \; &k_{\sigma \sqrt2}(x,y) k_{\sigma \sqrt2}(a,b) k_{\sigma \sqrt2}\left( x+y, a+b \right).
  \label{eq:k-triple-product}
\end{align}
To simplify this expression, 
notice that it takes the form $\exp(-S / (4\sigma^2))$, where 
\begin{align}
  S &= \lVert x - y \rVert^2 + \lVert a - b \rVert^2 + \lVert (x + y) - (a + b) \rVert^2.
\end{align}
By Lemma~\ref{lemma:norms-sums-identity-2}, $S$ is equal to
\begin{align}
  S = \lVert x - a \rVert^2 + \lVert x - b \rVert^2 + \lVert y - a \rVert^2 + \lVert y - b \rVert^2,
\end{align}
which means equation~\eqref{eq:k-triple-product} can be rewritten as
\begin{align*}
  \exp\left(-\frac{S}{4\sigma^2} \right)
  &= \exp\left(-\frac{\lVert x - a \rVert^2}{4\sigma^2} \right) \exp\left(-\frac{\lVert x - b \rVert^2}{4\sigma^2} \right) \exp\left(-\frac{\lVert y - a \rVert^2}{4\sigma^2} \right) \exp\left(-\frac{\lVert y - b \rVert^2}{4\sigma^2} \right) \\
  &= k_{\sigma \sqrt2}(x, a) k_{\sigma \sqrt2}(x, b) k_{\sigma \sqrt2}(y, a) k_{\sigma \sqrt2}(y, b).
\end{align*}
\end{proof}

With Lemma~\ref{lemma:inner-product-quad-product} available, it is possible to prove Proposition~\ref{prop:write-as-traces}:
\begin{proof}[Proof of Proposition~\ref{prop:write-as-traces}]
Define the vectors $z_i$ as described, so that $z_i^T z_j = k_{\sqrt2 \sigma}(x_i, x_j)$. For convenience, also write $K_{ij} = k_{\sqrt2 \sigma}(x_i, x_j)$, and observe that $K_{ij}^2 = k_\sigma(x_i, x_j)$. It follows that
\begin{align}
	\lVert f \rVert_\sigma^2 
	= \sum_i \sum_{j} a_i a_{j} k_\sigma(x_i, x_j)
	= \sum_i \sum_{j} a_i a_{j} K_{ij}^2 
	&= \sum_i \sum_{j} a_i a_{j} z_i^T z_j z_j^T z_i
\end{align}
Rearranging the inner terms, we find
\begin{align}
	\lVert f \rVert_\sigma^2
	= \sum_i a_i z_i^T \left( \sum_{j} a_{j} z_j z_j^T \right) z_i 
	= \sum_i a_i z_i^T A z_i 
	= \trace\left( \sum_i a_i z_i z_i^T A \right)
	= \trace(A^2),
\end{align}
where we have used the definition of $A$, the fact that the trace of a scalar is simply that scalar, and the cyclic property of the trace. The proof that $\lVert g \rVert_\sigma^2 = \trace(B^2)$ is identical, so we omit it.

The derivation of the trace form of $\lVert f g \rVert_{\sigma / \sqrt2}^2$ is more complicated. 
Expanding out $f g$, we see that
\begin{align}
  (fg)(x) = \sum_{i,j} a_i b_j k_\sigma(x_i, x) k_\sigma(x_j, x).
\end{align}
Therefore the norm $\lVert f g \rVert_{\sigma / \sqrt2}^2$, which is simply $\langle f g, f g \rangle_{\sigma / \sqrt2}$, is equal to:
\begin{align}
  \langle f g, f g \rangle_{\sigma / \sqrt2} 
  &= \left\langle \sum_{i,j} a_i b_j k_\sigma(x_i, x) k_\sigma(x_j, x), \sum_{i',j'} a_{i'} b_{j'} k_\sigma(x_{i'}, x) k_\sigma(x_{j'}, x) \right\rangle_{\sigma / \sqrt2} \\
  &= \sum_{i,j,i',j'} a_i a_{i'} b_j b_{j'} \left\langle k_\sigma(x_i, x) k_\sigma(x_j, x), k_\sigma(x_{i'}, x) k_\sigma(x_{j'}, x) \right\rangle_{\sigma / \sqrt2} \\
  &= \sum_{i,j,i',j'} a_i a_{i'} b_j b_{j'} k_{\sigma\sqrt2}(x_i,x_{i'}) k_{\sigma\sqrt2}(x_i,x_{j'}) k_{\sigma\sqrt2}(x_j,x_{i'}) k_{\sigma\sqrt2}(x_j,x_{j'}) \\
  &= \sum_{i,j,i',j'} a_i a_{i'} b_j b_{j'} K_{ii'} K_{ij'} K_{ji'} K_{jj'},
\end{align}
where in the second to last step we have used Lemma~\ref{lemma:inner-product-quad-product}.
Before continuing, 
observe the identity
\begin{align}
  \sum_\ell a_\ell K_{i \ell} K_{j \ell} 
  = \sum_\ell a_\ell z_i^T z_\ell z_\ell^T z_j
  = z_i^T \left( \sum_\ell a_\ell z_\ell z_\ell^T \right) z_j
  = z_i^T A z_j
\end{align}
Similarly, $\sum_\ell b_\ell K_{i \ell}  K_{j \ell} = z_i^T B z_j$.
Leveraging these identities, we continue:
\begin{align}
  \sum_{i,j,i',j'} a_i a_{i'} b_j b_{j'} K_{ii'} K_{ij'} K_{ji'} K_{jj'}
  &= \sum_{i,i',j} a_i a_{i'} b_{j} K_{ii'} K_{ji'} \sum_{j'} b_{j'} K_{ij'} K_{jj'} \\
  &= \sum_{i,i',j} a_i a_{i'} b_{j} K_{ii'} K_{ji'} ( z_{i}^T B z_{j} ) \\
  &= \sum_{i,j} a_i  b_{j} \left( \sum_{i'} a_{i'} K_{ii'} K_{ji'} \right) (z_{i}^T B z_{j} ) \\
  &= \sum_{i,j} a_i  b_{j} (z_j^T A z_i)( z_{i}^T B z_{j}).
\end{align}
At this point we leverage the cyclic property of the trace, so the above expression equals:
\begin{equation*}
  \trace\left( \sum_{i,j} a_i  b_{j} A z_i z_{i}^T B z_{j} z_j^T \right)
  = \trace\left(  A \left( \sum_i a_i z_i z_{i}^T \right) B \left( \sum_j b_j z_{j} z_j^T \right) \right)
  = \trace(A^2 B^2). \qedhere
\end{equation*}

\end{proof}

\subsection{Trace inequality}

\begin{proof}[Proof of Lemma~\ref{lemma:trace-submultiplicative}]
Consider the trace inner product $\langle X, Y \rangle = \trace(X^T Y) = \trace(X Y)$, where the final equality is because $X$ is symmetric. By the Cauchy-Schwarz inequality, we have $\trace(X Y) \leq \sqrt{\trace(X^2) \trace(Y^2)}$.
Let $\{\lambda_i\}_{i=1}^n$ be the eigenvalues of $X$. Then,
\begin{align}
  \trace(X^2) 
  = \sum_{i=1}^n \lambda_i^2
  \leq \sum_{i=1}^n \lambda_i^2 + 2\sum_{i=1}^n \sum_{j=i+1}^n \lambda_i \lambda_j
  = \left( \sum_{i=1}^n \lambda_i \right)^2
  = \trace(X)^2,
\end{align}
where the inequality holds because $\lambda_i$ are all nonnegative. The same holds for any positive semidefinite matrix, in particular, $Y$. Combining these two inequalities, we have
\begin{align}
  \trace(X Y) \leq \sqrt{\trace(X^2) \trace(Y^2)} \leq \sqrt{\trace(X)^2 \trace(Y)^2} = \trace(X) \trace(Y).
\end{align}
\end{proof}

\subsection{Extensions of Proposition~\ref{prop:write-as-traces}}
There are many useful corollaries and extensions of Proposition~\ref{prop:write-as-traces}.
First, we give a result that makes it tractable to actually compute $\lVert f g \rVert_{\sigma / \sqrt2}$:
\begin{corollary}
\label{corollary:compute-f-square-norm}
Suppose $f = \sum_{i=1}^n a_i k_\sigma(x_i, \cdot)$ and $g = \sum_{i=1}^n b_i k_\sigma(x_i, \cdot)$ have the same finite expansion, but with potentially different coefficients. Form the kernel matrix $K$ with $K_{ij} = k_{\sqrt2 \sigma}(x_i, x_j)$, where we have replaced the bandwidth $\sigma$ with $\sqrt2\sigma$. 
Write $D_a = \diag(a)$ and similarly for $D_b$.
Then, 
\begin{align}
  \lVert f g \rVert_{\sigma / \sqrt2}^2 = \trace( (D_a K)^2 (D_b K)^2 ).
\end{align}
\end{corollary}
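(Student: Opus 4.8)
The plan is to reduce directly to Proposition~\ref{prop:write-as-traces} and then just shuffle traces. Since $f$ and $g$ are expanded over the same points $x_1,\dots,x_n$, Proposition~\ref{prop:write-as-traces} gives $\lVert f g \rVert_{\sigma/\sqrt2}^2 = \trace(A^2 B^2)$, where $A = \sum_i a_i z_i z_i^T$ and $B = \sum_i b_i z_i z_i^T$ with $z_i = \phi_{\sqrt2\sigma}(x_i)$, so that $z_i^T z_j = k_{\sqrt2\sigma}(x_i,x_j) = K_{ij}$. First I would collect the $z_i$ as the columns of a single object $Z = [z_1\ \cdots\ z_n]$; then $Z^T Z = K$, and $A = Z D_a Z^T$, $B = Z D_b Z^T$.

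Next I would compute $A^2 = Z D_a (Z^T Z) D_a Z^T = Z D_a K D_a Z^T$ and, identically, $B^2 = Z D_b K D_b Z^T$, so that, using $Z^T Z = K$ for the middle factor,
\[
  A^2 B^2 = Z\, D_a K D_a K D_b K D_b\, Z^T .
\]
Taking the trace and cycling $Z^T$ to the front turns this into $\trace(K D_a K D_a K D_b K D_b)$; one further cyclic rotation, moving the leading $K$ to the back, gives $\trace(D_a K D_a K D_b K D_b K) = \trace\big((D_a K)^2 (D_b K)^2\big)$, which is exactly the claimed formula.

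The one point that needs a word of care is that the feature maps $z_i$ may be infinite-dimensional, so $A$ and $B$ are a priori operators on $\Hs_{\sqrt2\sigma}$ rather than matrices. This is harmless: $A$ and $B$ are finite-rank, hence trace-class, and every manipulation above is a cyclic rearrangement. Equivalently — and this is how I would phrase it to sidestep any infinite-dimensional bookkeeping — every trace appearing in the argument depends on the $z_i$ only through the Gram matrix $K$, so one loses nothing by simply taking $Z = K^{1/2} \in \reals^{n\times n}$ from the start. I do not expect any real obstacle; the content is entirely in Proposition~\ref{prop:write-as-traces}, which has already been established, and the rest is cyclicity of the trace.
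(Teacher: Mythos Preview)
Your proposal is correct and follows essentially the same route as the paper: invoke Proposition~\ref{prop:write-as-traces}, write $A = Z D_a Z^T$ and $B = Z D_b Z^T$ with $Z^T Z = K$, and then use cyclicity of the trace to reduce $\trace(A^2 B^2)$ to $\trace((D_a K)^2 (D_b K)^2)$. Your extra remark about the infinite-dimensional feature map (and the option of taking $Z = K^{1/2}$) is a nice touch that the paper's proof leaves implicit.
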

\begin{proof}
Pick vectors $z_i$ so that $z_i^T z_j = K_{ij}$,
and let $Z$ be the matrix with $i$-th column $z_i$.
Note that $A = \sum_{i=1}^n a_i z_i z_i^T = Z D_a Z^T$, and similarly for $B$.
Then we may write
\begin{align}
  \lVert f g \rVert_{\sigma / \sqrt2}^2 
  &\stackrel{(a)}{=} \trace( A^2 B^2 ) \\
  &= \trace( (Z D_a Z^T) (Z D_a Z^T) (Z D_b Z^T) (Z D_b Z^T) )\\
  &\stackrel{(b)}{=} \trace( D_a Z^T Z D_a Z^T Z D_b Z^T Z D_b Z^T Z )\\
  &\stackrel{(c)}{=} \trace( D_a K D_a K D_b K D_b K ) \\
  &= \trace( (D_a K)^2 (D_b K)^2 ),
\end{align}
where (a) is by Proposition~\ref{prop:write-as-traces}, (b) is by the cyclic property of the trace, and (c) follows since $Z^T Z = K$ by definition of $z_i$.
\end{proof}

\section{Proofs for Section~\ref{section:variance-reg}}
\begin{proof}[Proof of Lemma~\ref{lem:discrete-opt-solution}]
For notational convenience, we just write $\ell$ instead of $\vec \ell$.
First, notice that problem~\eqref{eq:discrete-mmd-dro-problem}, once the $w_i \geq 0$ constraint is dropped, can be written
\begin{align}
  \begin{array}{ll}
    \sup_w & \ell^T w \\
    \mathrm{s.t.} & \left( w - \frac1n \mathbf 1 \right)^T K \left( w - \frac1n \mathbf 1 \right) \leq \epsilon^2 \\
    & \mathbf 1^T w = 1
    \label{prob:discrete-mmd-dro}
  \end{array} 
\end{align}

Write $v = w - \frac1n \mathbf 1$. Then the value of problem~\eqref{prob:discrete-mmd-dro} is equal to 
\begin{align}
  \frac1n \mathbf 1^T \ell + 
  \begin{array}{ll}
    \sup_v & \ell^T v \\
    \mathrm{s.t.} & v^T K v \leq \epsilon^2 \\
    & \mathbf 1^T v = 0
  \end{array} 
\end{align}
and we can focus on this slightly simpler problem. This problem can be in turn rewritten as:
\begin{align}
  \sup_v \inf_{\eta \geq 0, \lambda} \left\{ \ell^T v - \eta(v^T K v - \epsilon^2) - \lambda \mathbf 1^T v \right\}.
\end{align}
By Slater's condition, strong duality holds, so the optimal value is equal to:
\begin{align}
  &\inf_{\eta \geq 0, \lambda} \left\{ \eta \epsilon^2 + \sup_v \left\{\ell^T v - \eta v^T K v - \lambda \mathbf 1^T v \right\} \right\} \\
  = \; &\inf_{\eta \geq 0, \lambda} \left\{ \eta \epsilon^2 + \sup_v \left\{- \eta v^T K v + (\ell - \lambda \mathbf 1)^T v \right\} \right\}.
\end{align}
The inner problem is a concave quadratic maximization problem. In general, if $A$ is symmetric, $-x^T A x + b^T x$ is maximized when $x = \frac12 A^{-1} b$, and the resulting objective value is $\frac14 b^T A^{-1} b$. Applying this to the problem at hand, we find that the optimal $v^*$ satisfies:
\begin{align}
  v^* = \frac{1}{2\eta} K^{-1} (\ell - \lambda \mathbf1),
\end{align}
and the corresponding objective value of the inner problem is
\begin{align}
  \frac{1}{4\eta} (\ell - \lambda \mathbf1)^T K^{-1} (\ell - \lambda \mathbf1).
\end{align}
The overall problem is therefore
\begin{align}
  \inf_{\eta \geq 0, \lambda} \left\{ \eta \epsilon^2 + \frac{1}{4\eta} (\ell - \lambda \mathbf1)^T K^{-1} (\ell - \lambda \mathbf1) \right\}.
\end{align}
The objective is a convex quadratic in $\lambda$, and it is simple to check that $\lambda^* = (\mathbf 1^T K^{-1} \ell) / (\mathbf 1^T K^{-1} \mathbf 1)$. Then, both remaining terms are positive, so it is optimal to balance them. This leads to
\begin{align}
  \eta^* \epsilon^2 &= \frac{1}{4 \eta^*} (\ell - \lambda^* \mathbf1)^T K^{-1} (\ell - \lambda^* \mathbf1) \\
  \implies \frac{1}{2\eta^*} &= \frac{\epsilon}{\sqrt{(\ell - \lambda^* \mathbf1)^T K^{-1} (\ell - \lambda^* \mathbf1)}},
\end{align}
and the overall optimal value is
\begin{align}
  &2 \cdot \frac{1}{4\eta^*} (\ell - \lambda^* \mathbf1)^T K^{-1} (\ell - \lambda^* \mathbf1) \\
  = \; &\epsilon \sqrt{(\ell - \lambda^* \mathbf1)^T K^{-1} (\ell - \lambda^* \mathbf1)}.
\end{align}
The term inside the square root is equal to
\begin{align}
  (\ell - \lambda^* \mathbf1)^T K^{-1} (\ell - \lambda^* \mathbf1)
  &= \ell^T K^{-1} \ell - 2 \lambda^* \mathbf 1^T K^{-1} \ell + (\lambda^*)^2 \mathbf 1^T K^{-1} \mathbf 1 \\
  &= \ell^T K^{-1} \ell - \frac{(\mathbf 1^T K^{-1} \ell)^2}{\mathbf 1^T K^{-1} \mathbf 1},
\end{align}
from which we can simply compute the overall objective of the original problem.
\end{proof}

\begin{proof}[Proof of Lemma~\ref{lemma:var-reg-id-plus-ones}]
One can prove via the matrix inversion lemma that
\begin{align}
  K^{-1} = (a I + b \mathbf1 \mathbf1^T)^{-1}
  &= a^{-1} \left[ I - \frac{b}{a + bn} \mathbf1 \mathbf1^T \right].
\end{align}
As a consequence,
\begin{align}
  a \ell^T K^{-1} \ell
  &= \lVert \ell \rVert^2 - \frac{b}{a+bn} (\mathbf 1^T \ell)^2 \\
  a \ell^T K^{-1} \mathbf1
  &= \mathbf1^T \ell - \frac{b}{a+bn} (\mathbf 1^T \ell) (\mathbf 1^T \mathbf 1)
  = \frac{a}{a+bn} \cdot \mathbf1^T \ell \\
  a \mathbf1^T K^{-1} \mathbf1
  &= \mathbf1^T \mathbf1 - \frac{b}{a+bn} (\mathbf 1^T \mathbf1)^2 
  = \frac{a}{a+bn} \cdot n.
\end{align}
It follows that
\begin{align}
  \frac{(a \ell^T K^{-1} \mathbf1)^2}{a \mathbf 1^T K^{-1} \mathbf1}
  = a \cdot \frac{\left( \frac{a}{a+bn} \mathbf1^T \ell \right)^2}{\frac{a}{a+bn} \cdot n}
  = a \cdot \frac{( \mathbf1^T \ell )^2}{n} \cdot \frac{a}{a+bn}
  = a \cdot ( \mathbf1^T \ell )^2 \cdot \left( \frac1n - \frac{b}{a+bn} \right)
\end{align}
and therefore
\begin{align}
  a \cdot \left[ \ell^T K^{-1} \ell - \frac{(\ell^T K^{-1} \mathbf1)^2}{\mathbf 1^T K^{-1} \mathbf1} \right]
  &= a \cdot \left[ \lVert \ell \rVert^2 - \frac{b}{a+bn} (\mathbf 1^T \ell)^2 - ( \mathbf1^T \ell )^2 \cdot \left( \frac1n - \frac{b}{a+bn} \right) \right] \\
  &= a \cdot \left[ \lVert \ell \rVert^2 - \frac{( \mathbf1^T \ell )^2}{n} \right]
  = a \cdot \Var_{\hat \distp_n}(\ell),
\end{align}
from which the conclusion follows.
\end{proof}

\end{document}